\documentclass{article}

\usepackage{microtype}
\usepackage{graphicx}
\usepackage{subfigure}
\usepackage{booktabs} % for professional tables
\usepackage{overpic}  % 在 preamble 加
\usepackage{subfigure}
\usepackage{subcaption} 
\usepackage{threeparttable}

\usepackage{rotating}  % 在preamble加
\usepackage{hyperref}

\usepackage[accepted]{icml2026}

\usepackage{amsmath}
\usepackage{amssymb}
\usepackage{mathtools}
\usepackage{amsthm}
\usepackage{bbding}
\usepackage{pifont}
\usepackage{amssymb}
\usepackage{multirow}
\usepackage{booktabs}
\usepackage{subcaption}
\usepackage{makecell}  % 用于单元格内换行

\definecolor{macaronBlue}{HTML}{5698C3}   % 柔和粉蓝
\definecolor{macaronPurple}{HTML}{9E7BB5} % 柔和粉紫
\definecolor{macaronPink}{HTML}{E06C8B}   % 柔和粉红

\usepackage{amsmath, amsthm, amssymb}
\usepackage[most]{tcolorbox}

\newtcolorbox{redbox}{
  colback=red!5!white,
  colframe=red!75!black,
  boxrule=0.5pt,
  arc=2pt,
  left=6pt, right=6pt, top=6pt, bottom=6pt
}

\newtcolorbox{bluebox}{
  colback=blue!5!white,
  colframe=blue!75!black,
  boxrule=0.5pt,
  arc=2pt,
  left=6pt, right=6pt, top=6pt, bottom=6pt
}
\usepackage[capitalize,noabbrev]{cleveref}

\theoremstyle{plain}
\newtheorem{theorem}{Theorem}[section]

\theoremstyle{definition}

\theoremstyle{remark}

\usepackage[textsize=tiny]{todonotes}
\usepackage[table,xcdraw]{xcolor}
\DeclareMathOperator*{\argmax}{arg\,max}

\icmltitlerunning{\texttt{PRISM}: A 3D Probabilistic Neural Representation for Interpretable Shape Modeling}
\begin{document}

\twocolumn[
\icmltitle{\includegraphics[scale=0.04]{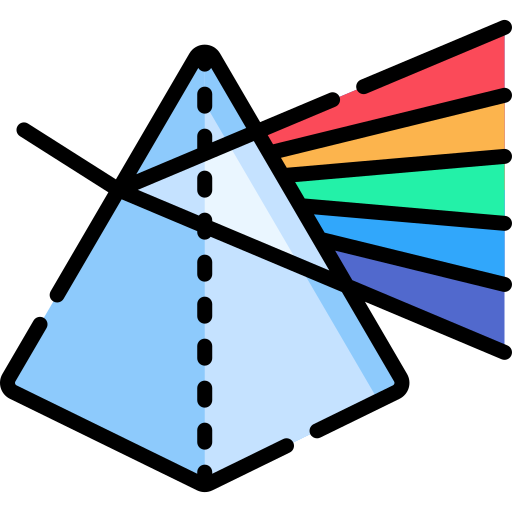} \texttt{PRISM}: A 3D Probabilistic Neural Representation \\ for Interpretable Shape Modeling}

% It is OKAY to include author information, even for blind
% submissions: the style file will automatically remove it for you
% unless you've provided the [accepted] option to the icml2025
% package.

% List of affiliations: The first argument should be a (short)
% identifier you will use later to specify author affiliations
% Academic affiliations should list Department, University, City, Region, Country
% Industry affiliations should list Company, City, Region, Country

% You can specify symbols, otherwise they are numbered in order.
% Ideally, you should not use this facility. Affiliations will be numbered
% in order of appearance and this is the preferred way.
\icmlsetsymbol{equal}{*}
\begin{icmlauthorlist}
\icmlauthor{Yining Jiao}{unc}
\icmlauthor{Sreekalyani Bhamidi}{unc}
\icmlauthor{Carlton Jude Zdanski}{med}
\icmlauthor{Julia S Kimbell}{med}
\icmlauthor{Andrew Prince}{med}
\icmlauthor{Cameron P Worden}{med}
\icmlauthor{Samuel Kirse}{med}
\icmlauthor{Christopher Rutter}{med}
\icmlauthor{Benjamin H Shields}{med}
\icmlauthor{William Alexander Dunn}{med}
\icmlauthor{Jisan Mahmud}{unc}
\icmlauthor{Marc Niethammer}{ucsd}
\end{icmlauthorlist}

\icmlaffiliation{unc}{Department of Computer Science, University of North Carolina at Chapel Hill, Chapel Hill, USA}
\icmlaffiliation{ucsd}{Department of Computer Science, University of California San Diego, La Jolla, USA}
\icmlaffiliation{med}{School of Medicine, University of North Carolina at Chapel Hill, Chapel Hill, USA}

\icmlcorrespondingauthor{Yining Jiao}{jyn@cs.unc.edu}
\icmlcorrespondingauthor{Marc Niethammer}{mniethammer@ucsd.edu}

% You may provide any keywords that you
% find helpful for describing your paper; these are used to populate
% the "keywords" metadata in the PDF but will not be shown in the document
% \icmlkeywords{Machine Learning, ICML}

\vskip 0.3in
]

% this must go after the closing bracket ] following \twocolumn[ ...

% This command actually creates the footnote in the first column
% listing the affiliations and the copyright notice.
% The command takes one argument, which is text to display at the start of the footnote.
% The \icmlEqualContribution command is standard text for equal contribution.
% Remove it (just {}) if you do not need this facility.

%\printAffiliationsAndNotice{}  % leave blank if no need to mention equal contribution
\printAffiliationsAndNotice{}   % otherwise use the standard text.

% \begin{abstract}
% Understanding how anatomical shapes—and their spatially varying uncertainties—evolve with developmental covariates is critical in healthcare research. However, existing approaches typically rely on global time-warping formulations that ignore spatially heterogeneous dynamics, and often conflate informative, covariate-driven variations with covariate-independent individual variability. We introduce \texttt{PRISM}, a probabilistic neural shape representation that models the full conditional distribution of shapes given covariates, yielding spatially continuous estimates of both the population mean and covariate-dependent uncertainty at arbitrary locations. Leveraging tools from information geometry, \texttt{PRISM} further enables an analytic decomposition of uncertainty into covariate-driven components (developmental pace) and intrinsic individual variability. Experiments on one synthetic and two clinical datasets demonstrate \texttt{PRISM}'s superior performance, confirming that its interpretable representation is essential for simultaneously addressing four diverse clinical tasks—ranging from trajectory forecasting to anomaly detection—within a single framework.
% \end{abstract}

\begin{abstract}
Understanding how anatomical shapes evolve in response to developmental covariates—and quantifying their spatially varying uncertainties—is critical in healthcare research. Existing approaches typically rely on global time-warping formulations that ignore spatially heterogeneous dynamics. We introduce \texttt{PRISM}, a novel framework that bridges implicit neural representations with uncertainty-aware statistical shape analysis. \texttt{PRISM} models the conditional distribution of shapes given covariates, providing spatially continuous estimates of both the population mean and covariate-dependent uncertainty at arbitrary locations. A key theoretical contribution is a closed-form Fisher Information metric that enables efficient, analytically tractable local temporal uncertainty quantification via automatic differentiation.  Experiments on three synthetic datasets and one clinical dataset demonstrate \texttt{PRISM}'s strong performance across diverse tasks—from modeling shape evolution to personalized shape prediction and anomaly detection—within a unified 
framework, while providing interpretable and clinically meaningful uncertainty estimates. The code is publicly available at \url{https://github.com/uncbiag/PRISM}.
\end{abstract}

\section{Introduction}
\label{sec:prism_intro}

Statistical shape modeling (SSM) aims to quantify biological variation by capturing the distribution of anatomical geometries. Many healthcare applications require modeling how anatomy changes with continuous covariates such as age $t$ while learning the conditional density $p(\mathcal{Y} \mid t)$~\cite{bone2018hierarchical,hong2017grassmann}. Critically, even at a fixed $t$, anatomy exhibits substantial inter-subject variability that is spatially heteroscedastic: some regions vary far more than others. Quantifying this spatially varying uncertainty is essential for distinguishing developmentally conserved regions from naturally diverse ones, enabling robust anomaly detection and personalized assessment.

Despite its importance, incorporating rigorous uncertainty quantification into covariate-aware shape modeling remains a challenge. Existing approaches generally fall into two primary categories, each with distinct limitations. On the one hand, \texttt{NAISR} achieves high-fidelity conditional generation but is fundamentally deterministic, providing point estimates of deformations without quantifying confidence or population variance. On the other hand, classical statistical atlases~\citep{durrleman2013ijcv, bone2018hierarchical} explicitly model shape variability using diffeomorphic mapping; yet, they face a critical representation gap. Specifically, while these methods consider variability within a parameter space (e.g., distributions of initial momenta or time-shifts), propagating this uncertainty to the image domain requires integrating velocity fields through non-linear deformations. Consequently, obtaining an explicit, pointwise map of aleatoric uncertainty directly on the anatomy is not analytically tractable and typically relies on computationally intensive numerical resampling (e.g., Monte Carlo strategies). Thus, a significant gap remains for a framework that offers a theoretically grounded, closed-form formulation capable of directly mapping intrinsic biological ambiguity onto the anatomical space for direct shape analysis.

We introduce \texttt{PRISM} (Probabilistic Interpretable Shape Modeling), a framework that bridges neural implicit representations with information geometry. \texttt{PRISM} models shape evolution as a continuous heteroscedastic Gaussian field and derives a closed-form Fisher Information metric for analytic uncertainty quantification at arbitrary spatial locations.

% %5. Contributions] 
% Specifically, our contributions are as follows: 
%  \begin{itemize}
%     \item \emph{Probabilistic Neural Representation:} We propose a conditional probabilistic implicit field that simultaneously captures the mean developmental trajectory and spatially varying population variability.
%     \item \emph{Analytical Uncertainty Disentanglement:} We propose a closed-form Fisher Information metric that enables efficient, analytically tractable local temporal uncertainty quantification via automatic differentiation.
%     \item \emph{Amortized Temporal Inference:} We train an inverse encoder to explicitly estimate developmental time from local shapes. This eliminates the need for test time optimization, facilitating efficient downstream analysis.
%     \item \emph{Versatile Framework \& Comprehensive Validation:} We demonstrate \texttt{PRISM}'s versatility across a broad spectrum of tasks, including \emph{shape evolution}, \emph{instrinsic time inference}, \emph{personalized shape prediction}, and \emph{Out-of-Distribution (OOD) detection}. Extensive experiments validate its superior performance and robustness compared to state-of-the-art baselines.
%  \end{itemize}

Specifically, our contributions are as follows: (1) a conditional probabilistic implicit field that jointly models the mean developmental trajectory and spatially varying population variability from cross-sectional data; (2) a closed-form Fisher Information metric that enables efficient and analytically tractable local uncertainty quantification via automatic differentiation; and (3) an amortized inverse encoder that estimates intrinsic developmental time from local shapes without test-time optimization. 

Experiments on synthetic and clinical datasets across diverse tasks—shape evolution, intrinsic time inference, personalized prediction, and OOD detection—demonstrate \texttt{PRISM}'s versatility and strong performance compared to state-of-the-art baselines.

% \begin{table}
% \resizebox{0.99\columnwidth}{!}{%
% \begin{tabular}{|c|cc|c|c|}
% \hline
% \multirow{2}{*}{} & \multicolumn{2}{c|}{Data}                        & \multirow{2}{*}{Heter. UQ} & \multirow{2}{*}{Disentangled Traj.} \\ \cline{2-3}
%              & \multicolumn{1}{c|}{Partial}      & Cross-sectional &              &              \\ \hline
% LongitudinalAtlas & \multicolumn{1}{c|}{\XSolidBrush} & \XSolidBrush & \XSolidBrush               & \Checkmark                          \\ \hline
% GSR          & \multicolumn{1}{c|}{\XSolidBrush} & \Checkmark      & \XSolidBrush & \XSolidBrush \\ \hline
% A-SDF        & \multicolumn{1}{c|}{\Checkmark}   & \Checkmark      & \XSolidBrush & \XSolidBrush \\ \hline
% NAISR        & \multicolumn{1}{c|}{\Checkmark}   & \Checkmark      & \XSolidBrush & \XSolidBrush \\ \hline
% PRISM (ours) & \multicolumn{1}{c|}{\Checkmark}   & \Checkmark      & \Checkmark   & \Checkmark   \\ \hline
% \end{tabular}
% }
% \end{table}
\section{Related Work}
\label{sec:related_work}

\paragraph{Shape Analysis.}
Statistical shape analysis quantifies population variability and covariate effects on anatomy~\citep{dryden2016shape,kendall1984shape,heimann2009statistical}. Classical point distribution models (PDMs) summarize variation via PCA~\citep{cootes1995asm,goodall1991procrustes}, but they operate on discrete coordinates and do not provide an analytic, continuous uncertainty field over the anatomical domain. Diffeomorphic registration and atlas frameworks (e.g., LDDMM/Deformetrica) preserve topology and enable population modeling~\citep{beg2005lddmm,durrleman2013ijcv,durrleman2014deformetrica}, yet their statistics are typically defined in a tangent parameter space (e.g., initial momenta/velocities), making pointwise uncertainty on the anatomy difficult to obtain in closed form~\citep{zhang2017bandlimited,bone2018hierarchical}. Gaussian Process Morphable Models introduce kernel-based probabilistic priors~\citep{luthi2018gpmm}, but their uncertainty is tied to the kernel/observation model and is not naturally conditioned on time/covariates nor designed for incomplete observations.
\emph{\texttt{PRISM} bridges this gap by modeling a continuous spatiotemporal field with uncertainty directly conditioned on covariates, enabling spatially resolved uncertainty quantification on anatomy.}

\paragraph{Uncertainty Estimation.}
Predictive uncertainty is commonly decomposed into aleatoric (data-inherent) and epistemic (model) components~\citep{kendall2017uncertainties,hullermeier2021survey,abdar2021review}. 
In medical imaging, a large body of work approximates epistemic uncertainty using sampling-based methods, e.g., Monte Carlo dropout~\citep{gal2016dropout} and deep ensembles~\citep{lakshminarayanan2017deepemsemble}, supporting tasks such as segmentation~\citep{jungo2019assessing}, reconstruction~\citep{vasconcelosuncertainty}, and visualization~\citep{saklani2024uncertainty}.
In clinical shape analysis, however, the primary operational need is often aleatoric uncertainty~\citep{huellermeier2021uncertainty}: spatially localized, covariate-conditioned population variability that distinguishes physiological variation from pathology. Existing computational anatomy pipelines quantify variability largely in parameter space (e.g., LDDMM momenta)~\citep{durrleman2013ijcv,bone2018hierarchical,zhang2017bandlimited}.
\emph{In contrast, \texttt{PRISM} provides an analytically tractable, spatially heteroscedastic uncertainty field on the anatomy that can be queried directly at arbitrary resolution.}

%\paragraph{Neural Representations.}
%Neural implicit representations model geometry as continuous, resolution-agnostic functions~\citep{sethian1999level,osher2005level,park2019deepsdf,mescheder2019occupancy}. Neural deformation models further learn template-to-target mappings within implicit fields and have been explored for medical shape analysis~\citep{zheng2021DIT,yang2022implicitatlas,wolterink2022inrmed}. NAISR conditions neural deformations on covariates to capture inter-subject variation but remains deterministic and does not quantify aleatoric uncertainty~\citep{jiao2024naisr}. Dynamic neural implicit models with time-warping have been used for 4D sequences, yet they typically lack uncertainty quantification and often assume closed, complete surfaces, which limits their applicability to cross-sectional clinical data. \emph{\texttt{PRISM} extends neural implicit shape modeling by estimating spatiotemporal heteroscedastic uncertainty and supporting learning with incomplete observations.}

\paragraph{Neural Representations.}
Neural implicit representations model geometry as continuous, resolution-agnostic functions~\citep{sethian1999level,osher2005level,park2019deepsdf,mescheder2019occupancy}. Neural deformation models further represent dense deformations using implicit fields~\citep{zheng2021DIT}, with medical applications to shape modeling~\citep{yang2022implicitatlas} and deformable image registration~\citep{wolterink2022inrmed}. A related line builds implicit \emph{image} atlases conditioned on covariates, differing in how they absorb inter-subject variability: CINA~\citep{dannecker2024cina}
and CINeMA~\citep{dannecker2025cinema} encode each subject in a latent code while avoiding deformation fields, whereas SINA~\citep{grossbrohmer2024sina} uses per-subject deformations. These operate in the image domain and produce only a
\emph{mean} atlas. NAISR~\citep{jiao2024naisr} conditions neural deformations on covariates but remains deterministic. None provides a covariate-conditioned, spatially heteroscedastic uncertainty field. Dynamic neural representations with time-warping have been used for 4D sequences~\citep{nizamani2025dynamic}, yet they typically lack uncertainty quantification and often assume closed, complete surfaces, which limits their applicability to cross-sectional clinical data. \emph{\texttt{PRISM} extends neural implicit shape modeling by estimating spatiotemporal heteroscedastic uncertainty and supporting learning with incomplete observations.}

\section{Problem Formulation}
\label{sec.problem_formulation}
\begin{figure*}[t]
\centering
\includegraphics[width=0.9\linewidth]{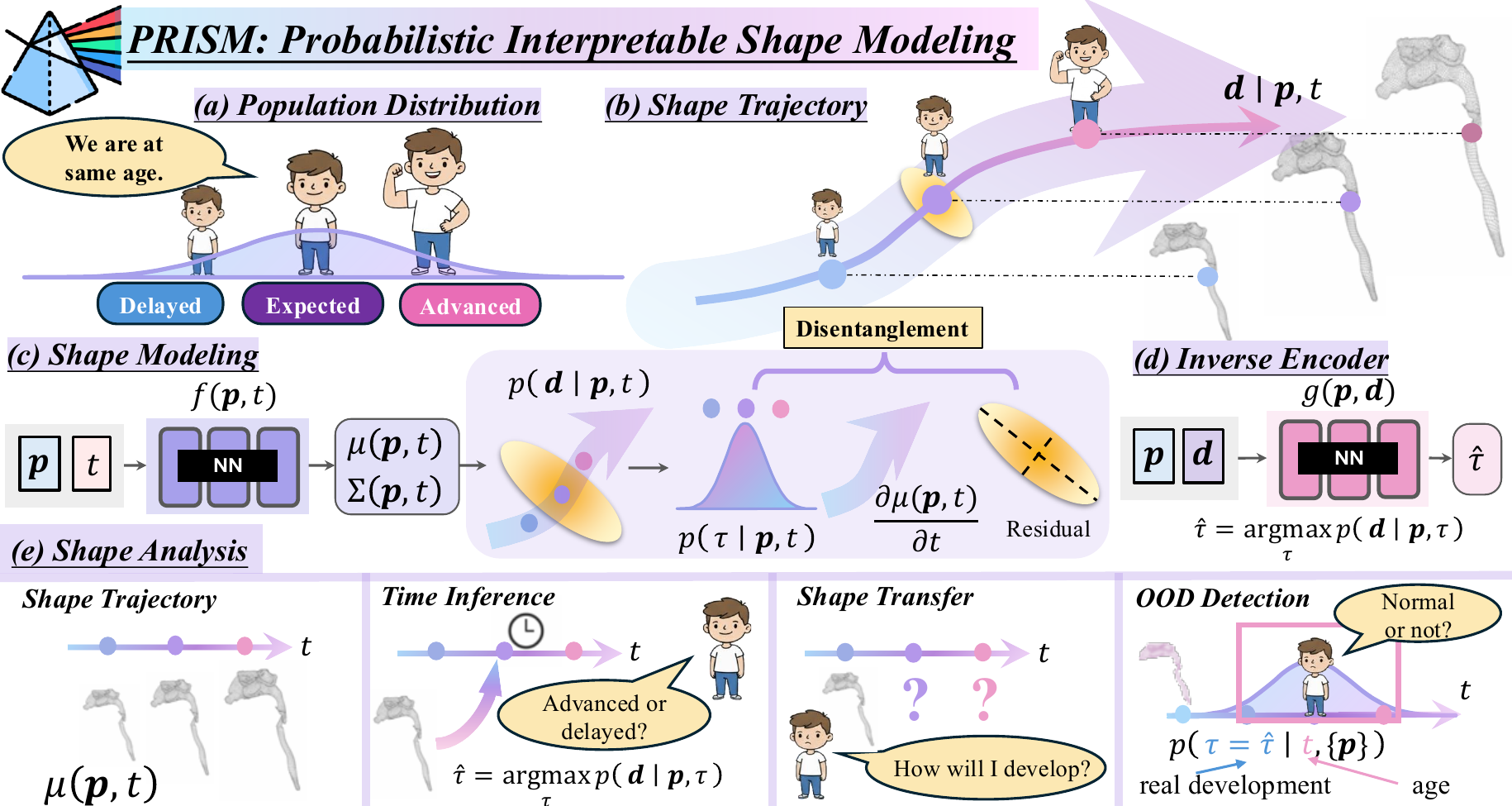}
\caption{\small Overview of the \texttt{PRISM} framework. 
(a) illustrates the population distribution of developmental stages (e.g., physiological ages) at a fixed covariate (e.g., chronological age), indicating that individuals can be developmentally delayed, expected, or advanced. 
(b) visualizes the probabilistic shape deformation trajectory $p(\boldsymbol{d} \mid \boldsymbol{p}, t)$, differentiating between variations in \emph{developmental progression} (indicated by \textcolor{macaronBlue}{blue}, \textcolor{macaronPurple}{purple}, \textcolor{macaronPink}{pink} dots) and \emph{intrinsic shape attributes} independent of the time $t$ (represented by the \textcolor{orange}{orange} plate).
(c) details the shape modeling component, where a neural network (NN) estimates the mean trajectory $\mu(\boldsymbol{p}, t)$ and total covariance $\Sigma(\boldsymbol{p}, t)$. These learned parameters enable the estimation of temporal uncertainty through the Fisher information (Eq.~\eqref{eq.fisher_info_full}). (d) presents the Inverse Encoder, which facilitates downstream analysis by inferring the developmental time $\hat{\tau}$ from a template query point $\boldsymbol{p}$ and deformation $\boldsymbol{d}$, serving as the foundation for tasks in (e).
(e) shows various applications of \texttt{PRISM} in shape analysis, including generating population-level shape trajectories, inferring individual developmental stage (time inference), predicting future shapes (shape transfer), and detecting abnormal development (OOD detection) by evaluating the likelihood of an observed shape within the population distribution.}
\label{fig:PRISM_pipeline}
\end{figure*}

\paragraph{Dataset.}
We consider a dataset comprising $N$ shape observations $\{\mathcal{Y}_i\}_{i=1}^N$, each associated with a corresponding covariate $t_i$ (e.g., time or age). Let $\Omega \subset \mathbb{R}^3$ denote a canonical domain on which we define a fixed reference template $\mathcal{T}$. Since the template shape is shared across the population, each observed shape $\mathcal{Y}_i$ is fully characterized by its deviation from $\mathcal{T}$. We model this deviation via a displacement field $\phi_i: \Omega \to \mathbb{R}^3$, which maps each template coordinate $\boldsymbol{p}$ to a displacement vector $\boldsymbol{d}$. For any point $\boldsymbol{p}$ on the template $\mathcal{T}$, the field $\phi_i(\cdot)$ outputs a 3D displacement $\boldsymbol{d} = \phi_i(\boldsymbol{p}) \in \mathbb{R}^3$, and the corresponding location on the $i$-th observed shape is $\boldsymbol{y} = \boldsymbol{p} + \boldsymbol{d}$. Thus, the $i$-th shape is represented as $\mathcal{Y}_i = \{ \boldsymbol{p} + \phi_i(\boldsymbol{p}) \mid \boldsymbol{p} \in \mathcal{T} \}$, and shape variability across the population is encoded entirely in the displacement fields $\{\phi_i\}$.

Although our ultimate interest is in the target position $\boldsymbol{y}$, we model the displacement $\boldsymbol{d}$ directly. Since the template point $\boldsymbol{p}$ is fixed, the linear relationship $\boldsymbol{y} = \boldsymbol{p} + \boldsymbol{d}$ implies that the distributions are equivalent up to a deterministic translation:
\begin{small}
\begin{equation}
    p(\boldsymbol{y} \mid \boldsymbol{p}, t) \equiv p(\boldsymbol{d} + \boldsymbol{p} \mid \boldsymbol{p}, t) = p(\boldsymbol{d} \mid \boldsymbol{p}, t) = \mathcal{N}(\mu(\boldsymbol{p},t), \Sigma(\boldsymbol{p},t))
\end{equation}
\end{small}
Crucially, this translation leaves the covariance structure \emph{invariant} ($\Sigma_{\boldsymbol{y}} = \Sigma_{\boldsymbol{d}}$). This invariance allows us to quantify the geometric uncertainty of the target position $\boldsymbol{y}$ directly by modeling the displacement $\boldsymbol{d}$.

\paragraph{Point Correspondence.}
Our framework operates on shapes represented as displacement fields from 
a common template, requiring point correspondence across subjects—each 
template coordinate $\boldsymbol{p}$ maps to anatomically equivalent 
locations across all shapes. This correspondence is established via a 
template-based registration model that learns bidirectional deformations 
between a learned shared template and individual anatomies. As this is not our primary contribution, we defer implementation details to 
Appendix~\ref{sec:correspondence}.

\paragraph{Goal.}
We model the conditional distribution of displacement $\boldsymbol{d}$ at template point $\boldsymbol{p}$ given covariate $t$ as $p(\boldsymbol{d} \mid \boldsymbol{p}, t) = \mathcal{N}(\mu(\boldsymbol{p},t), \Sigma(\boldsymbol{p},t))$, capturing both the mean trajectory and heteroscedastic variability~\cite{kendall2017uncertainties}. Central to our formulation is \emph{intrinsic time} $\tau$, a latent variable representing true developmental progression distinct from chronological time $t$—subjects at identical $t$ may be at different points along the shape evolution trajectory. We are particularly interested in quantifying this \emph{temporal uncertainty}: the distribution $p(\tau \mid \boldsymbol{p}, t)$ characterizing temporal variability at each spatial location.

Our framework targets three quantities: (1) \textbf{Conditional Shape Distribution} $p(\boldsymbol{d} \mid \boldsymbol{p}, t)$, the mean trajectory and total variability; (2) \textbf{Individual Intrinsic Time} $\hat{\tau} = g(\boldsymbol{p}, \boldsymbol{d})$, mapping observed deformation to its developmental stage; (3) \textbf{Temporal Uncertainty} $p(\tau \mid \boldsymbol{p}, t)$, quantifying developmental variability at location $\boldsymbol{p}$ and time $t$. A summary of notation is provided in Tab.~\ref{app:tab.prism_notations}.

\section{Methodology}
\label{sec:methodology}
Building on the problem formulation in Sec.~\ref{sec.problem_formulation}, this section details the computational framework for estimating the three targeted quantities: (1) the conditional shape distribution $p(\boldsymbol{d} \mid \boldsymbol{p}, t)$; (2) the individual intrinsic time $\hat{\tau}$; and (3) the intrinsic time distribution $p(\tau \mid \boldsymbol{p}, t)$. 

\subsection{Conditional Shape Distribution $p(\boldsymbol{d} \mid \boldsymbol{p}, t)$}
\label{sec:shape_distribution}

To capture the population-level trajectory $\mu(\boldsymbol{p}, t)$ and 
its associated variability $\Sigma(\boldsymbol{p}, t)$, we model the 
displacement field as a continuous, coordinate-based conditional 
distribution. We propose a \emph{probabilistic implicit representation}, 
where the observed displacement $\boldsymbol{d}$ is treated as a random 
variable conditioned on the chronological time $t$ and the template 
coordinate $\boldsymbol{p}$.

Formally, we model $p(\boldsymbol{d} \mid \boldsymbol{p}, t)$ as a 
heteroscedastic Gaussian field,
\begin{small}
\begin{equation}
\boldsymbol{d} \mid \boldsymbol{p}, t \ \sim \ 
\mathcal{N}\big(\mu(\boldsymbol{p}, t), \Sigma(\boldsymbol{p}, t)\big), 
\quad \forall \boldsymbol{p} \in \Omega, \ t \in \mathbb{R},
\label{eq:shape_model}
\end{equation}
\end{small}
where $\mu(\boldsymbol{p}, t)$ represents the \emph{mean developmental 
trajectory}, and $\Sigma(\boldsymbol{p}, t)$ captures the \emph{total 
population variability}. As shown in Fig.~\ref{fig:PRISM_pipeline}(c), 
the distributional parameters are parameterized by a coordinate-based 
neural network $f(\cdot)$.

\paragraph{Architecture.}
Both $\mu(\cdot)$ and $\Sigma(\cdot)$ are modeled with a covariate-driven term and a covariate-independent 
residual:
\begin{small}
\begin{align}
\mu(\boldsymbol{p}, t) &= \big[ f_\mu(\boldsymbol{p}, t) - f_\mu(\boldsymbol{p}, 0) \big] + h_\mu(\boldsymbol{p}), \\
\boldsymbol{L}(\boldsymbol{p}, t) &= \big[ f_L(\boldsymbol{p}, t) - f_L(\boldsymbol{p}, 0) \big] + h_L(\boldsymbol{p}),
\end{align}
\end{small}
where $\boldsymbol{L}$ is the Cholesky factor of $\Sigma$ (see \emph{Covariance Parameterization} below). The subtraction $f(\boldsymbol{p}, t) - f(\boldsymbol{p}, 0)$ ensures that the covariate-driven term vanishes at $t = 0$, so that $\mu(\boldsymbol{p}, 0) = h_\mu(\boldsymbol{p})$ and $\boldsymbol{L}(\boldsymbol{p}, 0) = h_L(\boldsymbol{p})$ by construction; this enforces identifiability between covariate-driven and covariate-independent components. 

\paragraph{Covariance Parameterization.}
To ensure positive definiteness, $\Sigma(\boldsymbol{p}, t)$ is parameterized via its Cholesky factor $\boldsymbol{L}$, with $\Sigma = \boldsymbol{L} \boldsymbol{L}^\top$. For shapes in $\mathbb{R}^n$, $\boldsymbol{L}$ has $\frac{n(n+1)}{2}$ free parameters per query point: 3 parameters for 2D geometry and 6 parameters for 3D geometry. A softplus activation on the diagonal entries, followed by adding a small positive constant $\epsilon$, enforces positivity, guaranteeing that $\Sigma$ is symmetric positive definite by construction and enabling stable NLL optimization without explicit constraints.

\paragraph{Training Objective.}
Given a shape $\mathcal{Y}$ represented by $M$ template-displacement pairs 
$\{(\boldsymbol{p}_{j}, \boldsymbol{d}_{j})\}_{j=1}^M$ associated with a common 
timestamp $t$, the standard Gaussian negative log-likelihood is
\begin{small}
\begin{equation}
\mathcal{L}_{\text{NLL}} = \frac{1}{2M} \sum_{j=1}^M \Big[ 
(\boldsymbol{d}_{j} - \mu_{j})^\top \Sigma_{j}^{-1} (\boldsymbol{d}_{j} - \mu_{j}) 
+ \log \det \Sigma_{j} \Big],
\label{eq:nll}
\end{equation}
\end{small}
where $\mu_{j} = \mu(\boldsymbol{p}_{j}, t)$ and $\Sigma_{j} = \Sigma(\boldsymbol{p}_{j}, t)$. 
Jointly optimizing $\mu$ and $\Sigma$ through Eq.~\eqref{eq:nll} is known to bias 
the mean estimator, as variance-weighted gradients pull $\mu$ away from the 
population mean~\citep{skafte2019reliable, seitzer2022pitfalls}. We therefore 
isolate gradient propagation by training each branch with its own loss:
\begin{small}
\begin{equation}
\underbrace{\mathcal{L}_\mu = \tfrac{1}{M}\textstyle\sum_j \| \boldsymbol{d}_j - \mu_j \|_1}_{\text{trains } \mu} 
\;,\quad 
\underbrace{\mathcal{L}_\Sigma = \mathcal{L}_{\text{NLL}}}_{\text{trains } \Sigma}\,,
\end{equation}
\end{small}
with $\mu$ held fixed when computing $\mathcal{L}_\Sigma$. We use $\ell_1$ on 
$\mu(\cdot)$ for robustness against outliers~\citep{huber1992robust}; the underlying 
model in Eq.~\eqref{eq:shape_model} remains a Gaussian field.

\paragraph{Training Strategy.}
To ensure stable convergence, we employ a two-stage curriculum. In Stage~1 
(epochs $1$ to $T_{\text{warm}}$), we freeze the covariance head and train 
only $\mu$ with $\mathcal{L}_\mu$. In Stage~2, both branches are trained 
jointly using $\mathcal{L}_\mu + \mathcal{L}_\Sigma$ as defined above. We 
set $T_{\text{warm}} = 10$.

\subsection{Individual Intrinsic Time $\hat{\tau}$}
\label{sec:intrinsic_time_inference}
We formulate the estimation of individual intrinsic time $\hat{\tau}$ as a Maximum Likelihood Estimation (MLE) problem. Given a template point $\boldsymbol{p}$ and its observed displacement $\boldsymbol{d}$, we seek the best intrinsic time $\hat{\tau}_{\text{MLE}}$ that maximizes the conditional likelihood:
\begin{equation}
\hat{\tau}_{\text{MLE}} = \argmax_{\tau} \log p(\boldsymbol{d} \mid \boldsymbol{p}, \tau)
\label{eq:mle_tau}
\end{equation}
where $p(\boldsymbol{d} \mid \boldsymbol{p}, \tau) = \mathcal{N}(\mu(\boldsymbol{p}, \tau), \Sigma(\boldsymbol{p}, \tau))$ is the learned conditional shape distribution as introduced in Sec.~\ref{sec:shape_distribution}.

Since solving this optimization iteratively for every subject is computationally prohibitive, we employ \emph{amortized inference}. We train an inverse encoder $g(\cdot)$ to directly predict intrinsic time from local observations.

The inverse encoder $g(\cdot)$ is trained using synthetic data generated by the learned forward model $f(\cdot)$. Specifically, we sample template coordinates $\boldsymbol{p}$ and intrinsic times $\tau$ uniformly, and query the forward model $f(\cdot)$ to obtain the corresponding mean displacement $\boldsymbol{d} = \mu(\boldsymbol{p}, \tau)$. This yields training triplets $\{(\boldsymbol{p}, \boldsymbol{d}, \tau)\}$, enabling supervised training of the inverse mapping. The inverse encoder $g(\cdot)$ is then trained with an $L_1$ loss as
\begin{small}
\begin{equation}
\mathcal{L}_{\text{inv}} = \frac{1}{M} \sum_{j=1}^{M} | g(\boldsymbol{p}_j , \boldsymbol{d}_j) - \tau_j | \,,
\end{equation}
\end{small}
where $M$ is the number of sampled pairs.

At test time, given a novel subject with observed displacements, the encoder directly maps each template-displacement pair $(\boldsymbol{p}, \boldsymbol{d})$ to an intrinsic time estimate $\hat{\tau} = g(\boldsymbol{p}, \boldsymbol{d})$, enabling dense temporal inference across the anatomy without iterative optimization. 

\subsection{Intrinsic Time Distribution $p(\tau \mid \boldsymbol{p}, t)$}
\label{sec:intrinsic_time_distribution}

Having established how to estimate individual intrinsic time $\hat{\tau}$, we now characterize its population-level distribution $p(\tau \mid \boldsymbol{p}, t)$. This distribution addresses a fundamental clinical question: for subjects at the same chronological time $t$, how much do their intrinsic times vary? A narrow distribution indicates that chronological time reliably predicts developmental progression, while a wide distribution suggests substantial individual variation. This information is critical for establishing normative ranges and identifying subjects whose development deviates from the population.

We quantify the spread of $p(\tau \mid \boldsymbol{p}, t)$ through the lens of estimation theory~\cite{cover1999elements}. The \emph{score function} measures how sensitively the log-likelihood responds to changes in time, defined as
\begin{small}
\begin{equation}
U(\boldsymbol{d}; \boldsymbol{p}, t) := \frac{\partial}{\partial t} \log p(\boldsymbol{d} \mid \boldsymbol{p}, t)
\end{equation}
\end{small}
A large $|U|$ indicates that the observed displacement $\boldsymbol{d}$ provides strong evidence for distinguishing nearby time points. The \textbf{Fisher Information} is defined as the expected squared score as
\begin{small}
\begin{equation}
I(\boldsymbol{p}, t) := \mathbb{E}_{\boldsymbol{d} \sim p(\boldsymbol{d} \mid \boldsymbol{p}, t)} \left[ U(\boldsymbol{d}; \boldsymbol{p}, t)^2 \right]
\end{equation}
\end{small}
which quantifies the average amount of information that an observed displacement carries about the time parameter~\cite{cramer1999mathematical,rao1945information}.

%The connection between Fisher Information and temporal variance follows from the Cram\'er-Rao inequality~\citep{cramer1999mathematical,rao1945information}.
We assume that the population-average intrinsic time equals the chronological time, i.e., $\mathbb{E}_{p(\boldsymbol{d}\mid \boldsymbol{p},t)}[\tau] = t$. 
This reflects a standard modeling assumption in longitudinal medical shape analysis that the population mean trajectory is parameterized by chronological age, while individual subjects may follow advanced or delayed progression around this mean~\citep{durrleman2013ijcv,hong2014time}. Under this assumption, $\tau$ can be viewed as an unbiased estimator of $t$, and the Cram\'er-Rao inequality yields
\begin{redbox}
\begin{small}
\begin{equation}
\mathrm{Var}(\tau \mid \boldsymbol{p}, t) \geq \frac{1}{I(\boldsymbol{p}, t)}
\label{eq.cramer_rao}
\end{equation}
\end{small}
\end{redbox}

Eq.~\eqref{eq.cramer_rao} reveals a fundamental trade-off: high Fisher Information (shapes that change distinctly over time) enables precise estimation, while low Fisher Information (ambiguous shapes) leads to irreducible uncertainty.

For our heteroscedastic Gaussian model $p(\boldsymbol{d} \mid \boldsymbol{p}, t) = \mathcal{N}(\mu(\boldsymbol{p}, t), \Sigma(\boldsymbol{p}, t))$, the Fisher Information admits a closed-form expression~\cite{skovgaard1984riemannian,nielsen2023simple} that decomposes into two terms,
\begin{small}
\begin{equation}
I_{\text{full}}(\boldsymbol{p}, t) = \underbrace{\left(\frac{\partial \mu}{\partial t}\right)^\top \Sigma^{-1} \left(\frac{\partial \mu}{\partial t}\right)}_{I_{\mu}} + \underbrace{\frac{1}{2} \mathrm{tr}\left(\left(\Sigma^{-1} \frac{\partial \Sigma}{\partial t}\right)^2\right)}_{I_{\Sigma}} \,.
\label{eq.fisher_info_full}
\end{equation}
\end{small}

%The two terms have distinct geometric meanings. $I_{\mu}$ measures the discriminability of the mean trajectory, i.e., how well neighboring time points can be distinguished based on their expected shapes. $I_{\Sigma}$ measures the discriminability of the population spread; e.g., if the variance is small at $t_1$ and large at $t_2$, an observation far from the mean would suggest $t_2$ rather than $t_1$, even if the mean trajectory is stationary.

The two terms in Eq.~\eqref{eq.fisher_info_full} represent two independent sources of information about $t$. %More illustration is available in Appendix~\ref{app:geometry}.
$I_{\mu}$ captures information from the evolution of the mean trajectory, while $I_{\Sigma}$ captures information from changes in population dispersion. These two sources are statistically independent, following from a classical result in information geometry: the mean and covariance parameters are orthogonal under the Fisher-Rao metric~\citep{amari2016information,skovgaard1984riemannian}. Although both are statistically valid, they answer different questions. $I_{\mu}$ tells us how precisely we can localize an individual along the developmental trajectory, which is our definition of temporal variability. $I_{\Sigma}$ captures changes in \emph{structural variability} (see Sec.~\ref{sec.problem_formulation}), i.e., how anatomical diversity evolves over time; while statistically informative, this is orthogonal to our goal of localizing individuals along the mean developmental trajectory. We therefore retain only $I_{\mu}$ as Fisher Information
\begin{small}
\begin{equation}
I(\boldsymbol{p}, t) := \left(\frac{\partial \mu}{\partial t}\right)^\top \Sigma^{-1} \left(\frac{\partial \mu}{\partial t}\right)
\end{equation}
\end{small}

\begin{bluebox}
The temporal uncertainty is then quantified as
\begin{small}
\begin{equation}
\sigma^2_{\tau}(\boldsymbol{p}, t) \approx I^{-1}(\boldsymbol{p}, t) = \frac{1}{\left(\frac{\partial \mu}{\partial t}\right)^\top \Sigma^{-1} \left(\frac{\partial \mu}{\partial t}\right)}
\label{eq.temporal_uncertainty}
\end{equation}
\end{small}
\end{bluebox}
% The Fisher Information $I(\boldsymbol{p}, t)$ provides a diagnostic of developmental dynamics. During periods of active progression, the mean shape changes rapidly ($\frac{\partial \mu}{\partial t}$ large), yielding large $I$ and low temporal variability $\sigma^2_{\tau} = I^{-1}$. Therefore, intrinsic time can be estimated precisely. During developmental plateaus, the mean shape evolves slowly ($\frac{\partial \mu}{\partial t} \approx 0$) or changes are masked by high population variance ($\Sigma$ large), resulting in small $I$ and high temporal variability—the anatomy becomes ambiguous along the temporal axis.

This estimate is analytic: $\frac{\partial\mu}{\partial t}$ comes from automatic differentiation and $\Sigma^{-1}$ from a single forward pass, so unlike Monte Carlo propagation, it incurs no sampling variance and can be queried densely across the anatomy in one pass.

Complete derivations of both the Cram\'er-Rao in Eq.~\eqref{eq.cramer_rao} bound and the closed-form Fisher Information in Eq.~\eqref{eq.fisher_info_full} are provided in Appendix~\ref{app:fisher}.

% The Cram\'er-Rao Lower Bound ($1/I(\boldsymbol{p}, t)$) defines the theoretical limit on the variance of any unbiased estimator. We utilize this bound not as a predictor of empirical error, but as an \emph{intrinsic biological ruler}---a standardized metric determined solely by the population distributional properties $p(\boldsymbol{y} \mid \boldsymbol{p}, t)$. By mapping this bound across the anatomy, we construct an \emph{Information Atlas} revealing which regions inherently possess richer developmental cues.

\subsection{Applications: A Unified Framework for Shape Analysis}
\label{subsec.shape_analysis}
% \texttt{PRISM} provides a unified probabilistic framework that supports four distinct clinical applications using a single learned representation. As introduced in Sec.~\ref{sec.problem_formulation}, each shape is characterized by a displacement field $\phi$. For each template point $\boldsymbol{p}$, the observed position is $\boldsymbol{y} = \boldsymbol{p} + \phi(\boldsymbol{p})$, where $\boldsymbol{d} = \phi(\boldsymbol{p})$ denotes the local displacement.

\paragraph{Continuous Shape Trajectory}
For any template coordinate $\boldsymbol{p} \in \Omega$ at time $t$, the normative population distribution $\boldsymbol{d} \sim \mathcal{N}(\mu(\boldsymbol{p}, t), \Sigma(\boldsymbol{p}, t))$ provides a reference standard for healthy development at arbitrary spatiotemporal resolutions.

\paragraph{Intrinsic Time Estimation.} 
Given an observed anatomy $\mathcal{Y}$, the inverse encoder $g$ (Sec.~\ref{sec:intrinsic_time_inference}) estimates a local intrinsic time $\hat{\tau} = g(\boldsymbol{p}, \boldsymbol{d})$ for each template point $\boldsymbol{p}$, generating a dense map of intrinsic time across the anatomy. To estimate global developmental progression $\bar{\tau}$, one can aggregate these point-level estimates. When the chronological age $t$ is unknown, we simply compute the arithmetic mean:
\begin{small}
\begin{equation}
\bar{\tau} = \frac{1}{|\mathcal{T}|} \sum_{\boldsymbol{p} \in \mathcal{T}} g(\boldsymbol{p}, \boldsymbol{d}) \,.
\label{eq.time_estimation_mean}
\end{equation}
\end{small}
When $t$ is available, we can leverage the Fisher Information $I(\boldsymbol{p}, t)$ to obtain a more refined estimate by weighting each contribution according to its temporal discriminability:
\begin{small}
\begin{equation}
\bar{\tau} = \frac{\sum_{\boldsymbol{p} \in \mathcal{T}} I(\boldsymbol{p}, t) \cdot g(\boldsymbol{p}, \boldsymbol{d})}{\sum_{\boldsymbol{p} \in \mathcal{T}} I(\boldsymbol{p}, t)} \,,
\end{equation}
\label{eq.time_estimation_weight}
\end{small}
thereby prioritizing regions with high temporal discriminability while down-weighting ambiguous areas.

\paragraph{Personalized Longitudinal Prediction} 
Let $\boldsymbol{y}_0$ denote the observed position of template point $\boldsymbol{p}$ on anatomy $\mathcal{Y}_0$ at chronological time $t_0$, with estimated intrinsic time $\tau_0$. To forecast the future position $\boldsymbol{y}_1$ at chronological time $t_1 = t_0 + \Delta t$, we assume the subject's temporal z-score remains constant. Specifically, the temporal z-score at $t_0$ is defined as
\begin{small}
\begin{equation}
z_\tau = \frac{\tau_0 - t_0}{\sigma_\tau(t_0)} \,,
\end{equation}
\end{small}
where $\sigma_\tau(t_0)$ denotes the population-level temporal variability at $t_0$. Assuming $z_\tau$ remains unchanged, the predicted intrinsic time at $t_1$ is given by
\begin{small}
\begin{equation}
\tau_1 = t_1 + z_\tau \cdot \sigma_\tau(t_1) \,.
\end{equation}
\end{small}
The future anatomical position is then obtained from the mean trajectory as
\begin{small}
\begin{equation}
\boldsymbol{y}_1 = \boldsymbol{p} + \mu(\boldsymbol{p}, \tau_1) \,.
\label{eq.long_pred}
\end{equation}
\end{small}
This formulation preserves the subject's developmental progression stage (whether advanced or delayed) while predicting their personalized trajectory.

\paragraph{OOD Detection.}
We demonstrate OOD detection on pediatric airway obstruction (e.g., subglottic stenosis), where pathological constrictions manifest as regions that appear \emph{developmentally younger} than the rest of the same anatomy. For each template point $\boldsymbol{p}$ with observed displacement $\boldsymbol{d}$, the inverse encoder yields a local intrinsic time $\hat{\tau}_{\boldsymbol{p}} = g(\boldsymbol{p}, \boldsymbol{d})$. Let $\tilde{\tau} = \mathrm{median}_{\boldsymbol{q} \in \mathcal{T}}\, 
\hat{\tau}_{\boldsymbol{q}}$ be the anatomy-wide median and $\boldsymbol{p}^{\ast} = \arg\max_{\boldsymbol{q} \in \mathcal{T}} 
\hat{\tau}_{\boldsymbol{q}}$ the most developmentally advanced point. We score each anatomy by the largest temporal lag of any point relative to $\boldsymbol{p}^{\ast}$, normalized by local uncertainty:
\begin{small}
\begin{equation}
\text{Score}_{\text{OOD}} = \min_{\boldsymbol{p} \in \mathcal{T}} \left[\; 
\frac{\hat{\tau}_{\boldsymbol{p}} - \tilde{\tau}}{\sigma_{\tau}(\boldsymbol{p}, t)} 
\;-\; 
\frac{\hat{\tau}_{\boldsymbol{p}^{\ast}} - \tilde{\tau}}{\sigma_{\tau}(\boldsymbol{p}^{\ast}, t)} 
\;\right],
\label{eq.ood}
\end{equation}
\end{small}
where $\sigma_{\tau}(\boldsymbol{p}, t)$ is the local temporal variability from the Fisher Information. Each bracketed term is a z-score-like deviation of a point's intrinsic time from the anatomy median; the subtraction contrasts each point against $\boldsymbol{p}^{\ast}$. A strongly negative score flags regions lagging behind the rest of the anatomy beyond normal variation, highlighting pathology without labeled anomaly data.

\section{Experiments}
\label{sec:exp}
Our experiments aim to \textcircled{1}~validate \texttt{PRISM}'s representations and \textcircled{2}~demonstrate its clinical utility. On synthetic data with known ground truth, we first verify that PRISM accurately recovers mean shape evolution, estimates local and global intrinsic time, and quantifies temporal uncertainty in shape space. We then apply it to two clinical tasks: personalized longitudinal prediction of developmental trajectories and OOD detection of pathological airways.

\subsection{Experimental Setup} 
\label{subsec:datasets}

\subsubsection{Datasets.}
We evaluate \texttt{PRISM} on four datasets with increasing complexity. More details are available in Appendix~\ref{appa:datasets}.

\paragraph{Starman~(G\&L).}
We generate two variants with known ground-truth intrinsic time: \textbf{Starman~(G)} with global temporal uncertainty, and \textbf{Starman~(L)} with spatially-varying temporal uncertainty where arms and legs follow distinct developmental trajectories. Each variant contains 1,000 training and 1,000 testing subjects with 1--9 longitudinal observations per subject ($\sim$5,000 shapes total per variant). The time $t$ represents a unitless simulation time ranging from 0 to 1.

\paragraph{ANNY.}
We use ANNY~\cite{2025humanmeshmodelinganny}, a parametric human body model spanning infancy to adulthood. We simulate inter-subject variability with temporal uncertainty increasing during puberty, following bone age literature~\cite{cole2010sitar,thodberg2008bonexpert}. The dataset contains 1,507 training shapes from 1,000 subjects and 324 testing shapes from 100 subjects. The time $t$ is chronological age in years (0--20).

\paragraph{Pediatric Airway.}
Our clinical dataset comprises 358 CT scans from 264 subjects (ages 0--19 years), including 34 with longitudinal sequences. Airway surfaces are segmented from CT images using a two-stage UNet~\cite{unetcciccek20163d}. The airways are straightened via a rotation-minimizing frame~\cite{bishop1975rotationminimization} to remove pose variation while preserving cross-sectional geometry. An additional 31 scans with subglottic stenosis (pathological airway narrowing) serve as out-of-distribution (OOD) samples for anomaly detection. The time $t$ is chronological age in months; unlike synthetic data, ground-truth intrinsic time is unavailable.

For these three datasets, point samples and SDF values are computed following~\cite{park2019deepsdf, sitzmann2020siren}. During preprocessing, our template-based registration module learns a shared template and an invertible deformation mapping the template to each anatomy (Appendix~\ref{sec:correspondence}); the resulting deformations serve as the training signal for \texttt{PRISM}.

\subsubsection{Comparison Methods.}
We compare with implicit representation methods capable of covariate-conditioned shape modeling. We select A-SDF~\cite{mu2021ASDF}, which directly models the signed distance field conditioned on covariates, and NAISR~\cite{jiao2024naisr}, which learns neural deformations from a shared template conditioned on covariates. Both methods use per-shape latent codes to capture individual variation but do not model uncertainty. A detailed architectural and modeling comparison of \texttt{PRISM} with NAISR and A-SDF is provided in Appendix~\ref{supp:arch_comparison}.

\subsection{Experiment Results}

\paragraph{Population Trend.}

\begin{table}[t]
\centering
\caption{\small \textbf{Quantitative evaluation of mean trajectory reconstruction.} CD = Chamfer distance. EMD = Earth mover’s distance. HD = Hausdorff distance. Given physical time $t$, each method reconstructs the mean shape $\boldsymbol{\mu}(p,t)$. Metrics are scaled by $100$; lower is better. Best results are in \textbf{bold}.}
\resizebox{0.8\linewidth}{!}{%
\begin{tabular}{llccc}
\toprule
Dataset & Method & CD ($\downarrow$) & HD ($\downarrow$) & EMD ($\downarrow$) \\
\midrule
\multirow{3}{*}{Starman~(G)} & A-SDF & \textbf{0.137} & \textbf{6.842} & 0.215 \\
 & NAISR & 0.140 & 6.862 & 0.216 \\
 & \cellcolor{gray!15}\texttt{PRISM} & \cellcolor{gray!15}0.138 & \cellcolor{gray!15}6.907 & \cellcolor{gray!15}\textbf{0.195} \\
\midrule
\multirow{3}{*}{Starman~(L)} & A-SDF & 0.288 & \textbf{9.408} & 0.374 \\
 & NAISR & 0.351 & 10.128 & 0.401 \\
 & \cellcolor{gray!15}\texttt{PRISM} & \cellcolor{gray!15}\textbf{0.287} & \cellcolor{gray!15}9.481 & \cellcolor{gray!15}\textbf{0.360} \\
\midrule
\multirow{3}{*}{ANNY} & A-SDF & \textbf{0.040} & 4.586 & \textbf{0.997} \\
 & NAISR & 0.058 & 8.158 & 1.165 \\
 & \cellcolor{gray!15}\texttt{PRISM} & \cellcolor{gray!15}0.048 & \cellcolor{gray!15}\textbf{4.450} & \cellcolor{gray!15}1.124 \\
\midrule
\multirow{3}{*}{Airway} & A-SDF & 0.114 & 10.508 & 2.040 \\
 & NAISR & 0.072 & 10.075 & 1.422 \\
 & \cellcolor{gray!15}\texttt{PRISM} & \cellcolor{gray!15}\textbf{0.064} & \cellcolor{gray!15}\textbf{9.614} & \cellcolor{gray!15}\textbf{1.308} \\
\bottomrule
\end{tabular}%
}
\label{tab.quantitative_shape_traj}
\end{table}

As summarized in Tab.~\ref{tab.quantitative_shape_traj}, \texttt{PRISM} consistently achieves competitive or superior performance across all four datasets. On the Airway dataset, our method attains the lowest reconstruction error while remaining comparable to the best on Starman~(G\&L) and ANNY. Compared against NAISR specifically, \texttt{PRISM} improves on nearly all metrics across all datasets. We attribute this gain to its use of precomputed dense correspondences: by decoupling correspondence from reconstruction, \texttt{PRISM} optimizes a simpler objective, whereas NAISR must learn both jointly, which can complicate optimization. A-SDF, in contrast, directly maps covariates to signed distance values without explicit correspondence. This formulation fits the mean trajectory well given sufficient data (Starman~(G\&L), ANNY), but overfits on the smaller dataset (Airway), yielding higher reconstruction errors than deformation-based methods~\citep{jiao2024naisr}.

This quantitative comparison confirms that \texttt{PRISM} provides a high-fidelity estimation of the ``average'' anatomy, effectively recovering the global developmental trend from sparse, unstructured cross-sectional observations.

Qualitatively, as demonstrated in Fig.~\ref{fig.generated_starman_shape} and Fig.~\ref{fig.generated_airway_shape}, the estimated mean trajectory exhibits smooth, biologically plausible transitions across the developmental timeline, demonstrating that \texttt{PRISM} successfully interpolates the underlying population-level growth dynamics.

\paragraph{Intrinsic Time Estimation.}

\begin{table}[t]
\centering
\caption{\small \textbf{Global intrinsic time estimation.} Pearson 
correlation ($r\uparrow$), $R^2$ ($\uparrow$), mean absolute error 
(MAE $\downarrow$), and per-case inference time. \texttt{PRISM} 
(Amortized) remains competitive with baselines while being an order of magnitude faster.}
\resizebox{0.99\linewidth}{!}{%
\begin{tabular}{lllcccc}
\toprule
Dataset & Method & Inference & $r$ ($\uparrow$) & $R^2$ ($\uparrow$) & MAE ($\downarrow$) & Time/case (s) ($\downarrow$) \\
\midrule
\multirow{4}{*}{Starman~(G)} & A-SDF & TTO & 0.992 & 0.983 & 0.016 & 4.005 \\
 & NAISR & TTO & 0.991 & 0.980 & 0.019 & 7.892 \\
 & \cellcolor{gray!15}\texttt{PRISM} & \cellcolor{gray!15}Amortized & \cellcolor{gray!15}\textbf{1.000} & \cellcolor{gray!15}\textbf{0.999} & \cellcolor{gray!15}\textbf{0.005} & \cellcolor{gray!15}\textbf{0.040} \\
\midrule
\multirow{3}{*}{ANNY} & A-SDF & TTO & \textbf{0.996} & \textbf{0.991} & \textbf{0.351{\scriptsize yr.}} & 7.045 \\
 & NAISR & TTO & 0.988 & 0.958 & 0.933{\scriptsize yr.} & 8.606 \\
 & \cellcolor{gray!15}\texttt{PRISM} & \cellcolor{gray!15}Amortized & \cellcolor{gray!15}0.993 & \cellcolor{gray!15}0.971 & \cellcolor{gray!15}0.602{\scriptsize yr.} & \cellcolor{gray!15}\textbf{0.430} \\
\midrule
\multirow{4}{*}{Airway} & A-SDF & TTO & 0.745 & 0.334 & 42.886{\scriptsize mo.} & 16.863 \\
 & NAISR & TTO & 0.908 & 0.820 & 20.512{\scriptsize mo.} & 24.363 \\
 & \cellcolor{gray!15}\texttt{PRISM} & \cellcolor{gray!15}Amortized & \cellcolor{gray!15}0.893 & \cellcolor{gray!15}0.792 & \cellcolor{gray!15}22.639{\scriptsize mo.} & \cellcolor{gray!15}\textbf{0.805} \\
 & \cellcolor{gray!15}\texttt{PRISM} & \cellcolor{gray!15}TTO & \cellcolor{gray!15}\textbf{0.923} & \cellcolor{gray!15}\textbf{0.843} & \cellcolor{gray!15}\textbf{19.618{\scriptsize mo.}} & \cellcolor{gray!15}8.838 \\
\bottomrule
\end{tabular}%
}
\label{tab:time_estimation}
\end{table}

\begin{table}[t]
\centering
\caption{\small \textbf{Local intrinsic time estimation.} On Starman~(L), 
intrinsic time varies across body parts; \texttt{PRISM} is the only 
method applicable.}
\resizebox{\linewidth}{!}{%
\begin{tabular}{lllccc}
\toprule
Dataset & Method & Location & $r$ ($\uparrow$) & $R^2$ ($\uparrow$) & MAE ($\downarrow$) \\
\midrule
\multirow{2}{*}{Starman~(L)} & \multirow{2}{*}{\texttt{PRISM}} & Arm & 1.000 & 0.999 & 0.008 \\
 &  & Leg & 1.000 & 0.999 & 0.004 \\
\bottomrule
\end{tabular}%
}
\label{tab:local_time_estimation}
\end{table}

We evaluate \texttt{PRISM}'s intrinsic time estimation at both global and local levels. The inverse encoder $g(\cdot)$ estimates intrinsic time in a single forward pass (Eq.~\eqref{eq.time_estimation_mean}). In contrast, baselines A-SDF and NAISR require per-sample test-time optimization, denoted TTO~\citep{jiao2024naisr}, which treats time as a free variable while keeping network weights frozen. On synthetic datasets, ground-truth intrinsic time is available; on the Airway dataset, chronological age serves as a surrogate.

\textit{Global estimation.} The amortized \texttt{PRISM} matches or exceeds baseline accuracy while being an order of magnitude faster (Tab.~\ref{tab:time_estimation}). On Airway, NAISR with TTO edges out amortized \texttt{PRISM}, but this gap reflects the amortization trade-off rather than a methodological disadvantage. %\texttt{PRISM} thus offers a flexible accuracy-efficiency trade-off: amortized inference suffices for most cases, with TTO available when maximum accuracy is required.

\textit{Local estimation.} On Starman~(L), intrinsic time varies spatially across body parts, a regime where baselines are inapplicable since they only estimate a single global time per shape. As shown in Tab.~\ref{tab:local_time_estimation}, \texttt{PRISM} achieves near-perfect estimation for both arm and leg regions, demonstrating its unique capability to capture fine-grained developmental heterogeneity.

\paragraph{Temporal Uncertainty.}
\begin{figure}[t]
    \centering
    \begin{overpic}[width=0.8\linewidth]{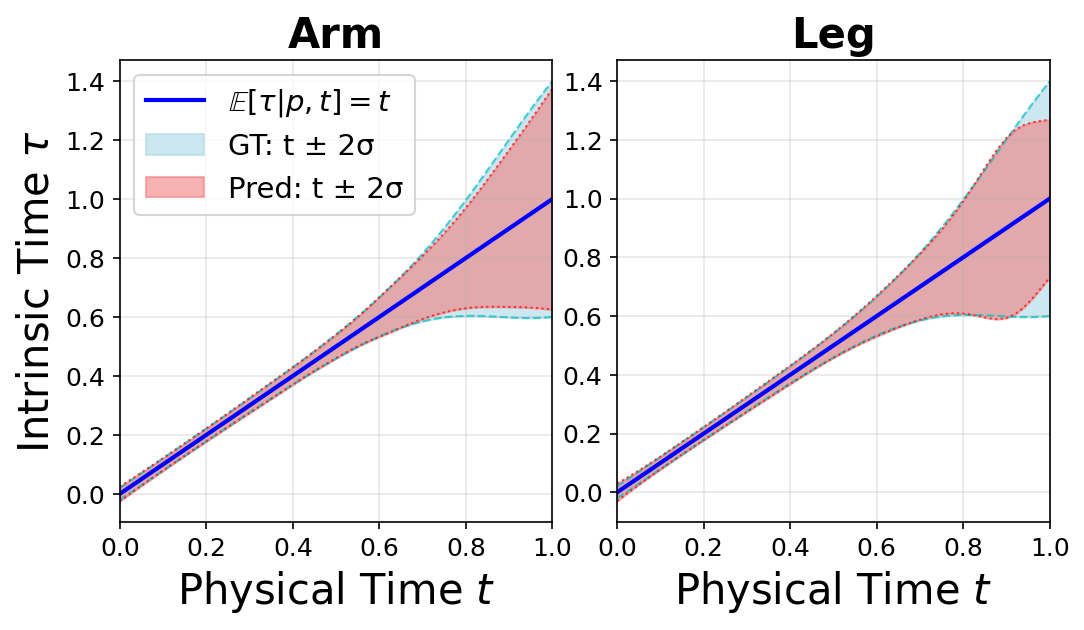}
        \put(2,52){\large \textbf{(a)}}  % (x%, y%) 
    \end{overpic}\\
    \begin{overpic}[width=0.8\linewidth]{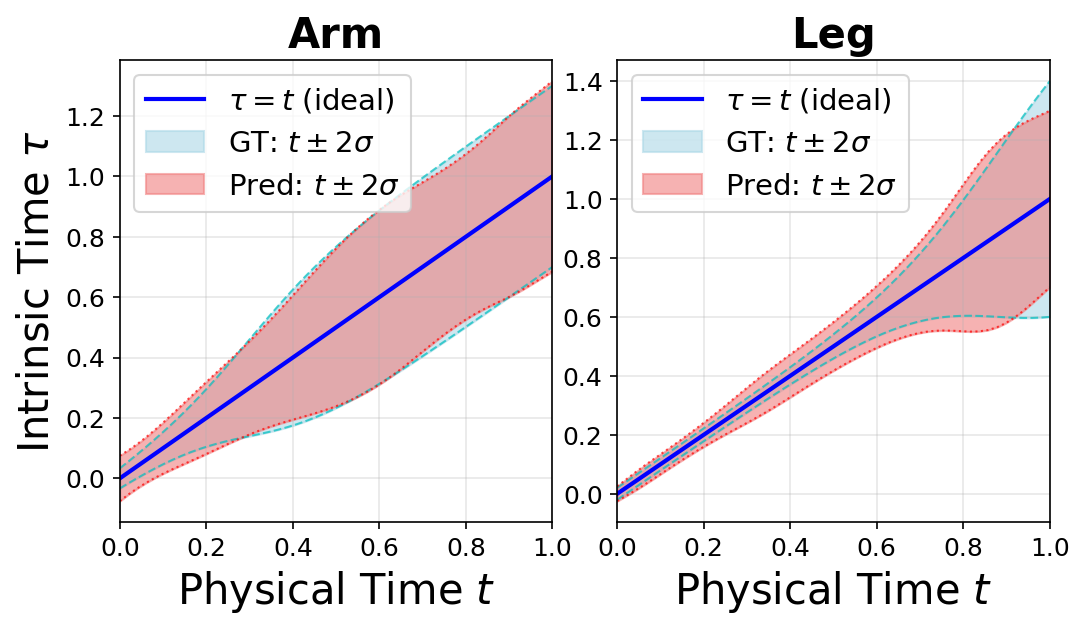}
        \put(2,52){\large \textbf{(b)}}
    \end{overpic}
    \caption{\small Qualitative validation of uncertainty estimation on the simulated Starman datasets. \textbf{(a)} Results on the Starman~(G) dataset and \textbf{(b)} results on the Starman~(L) dataset. The plot compares the uncertainty estimates from \texttt{PRISM} at different locations with the ground truth. The \textcolor{blue}{blue} shaded regions represent the ground truth distribution of the conditional distribution $p(\tau \mid \boldsymbol{p}, t)$, while the \textcolor{red}{red} shaded regions show the distribution estimated by our method. The tight alignment between the two demonstrates \texttt{PRISM}'s ability to accurately recover the true underlying uncertainty profile of the data-generating process.}
    \label{fig.starman_temporal_uncertainty}
\end{figure}

\begin{figure}
    \centering
    \includegraphics[width=0.9\linewidth]{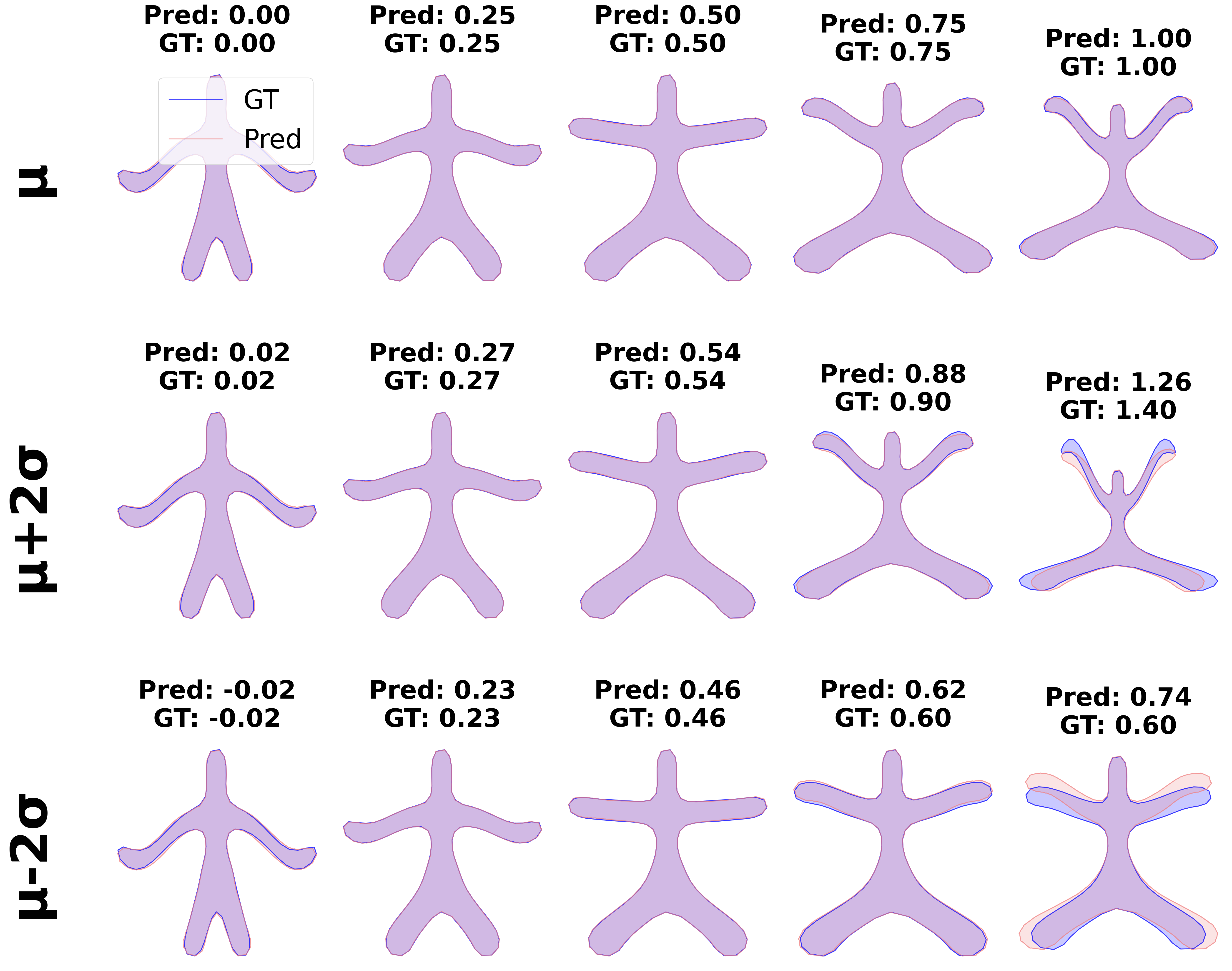}
    \caption{Visual comparison of uncertainty-aware shape reconstruction on the Starman~(G) dataset.
We decode shapes using time points at the mean and $\pm 2\sigma$ of the predicted intrinsic time distribution. The red contours represent the shapes reconstructed by \texttt{PRISM} at these time points, while the blue contours represent the ground truth shapes. The high degree of overlap verifies that the learned temporal uncertainty correctly translates into valid geometrical deformations.}
    \label{fig.generated_starman_shape}
\end{figure}

\begin{figure*}
    \centering
    \includegraphics[width=0.9\linewidth]{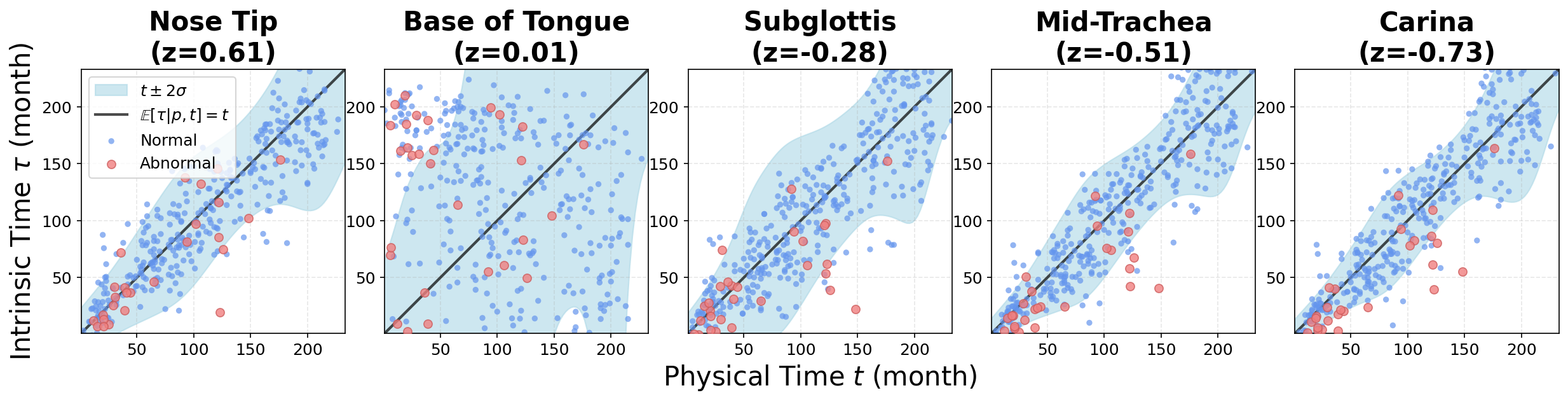}
    \caption{Spatially-varying uncertainty quantification across anatomical landmarks in pediatric airways. Each subplot displays the relationship between chronological age $t$ (x-axis) and the predicted intrinsic developmental age $\hat{\tau} = g(\boldsymbol{p}, \boldsymbol{d})$ (y-axis) at a specific anatomical landmark, where $g(\cdot)$ is the learned inverse encoder. Landmarks progress from the nose tip (top-left) to the carina (bottom-right). The shaded region indicates $\tau \pm 2\sigma$ estimated by \texttt{PRISM}; the diagonal line represents $\mathbb{E}[\tau|t] = t$. \textcolor{blue}{Blue} points denote normal subjects; \textcolor{red}{red} points denote abnormal cases. }
    \label{fig.airway_temporal_uncertainty}
\end{figure*}

We quantify temporal uncertainty via the Fisher information (Eq.~\eqref{eq.temporal_uncertainty}). We first validate on the synthetic Starman~(G\&L) datasets, where the ground truth is available. As shown in Fig.~\ref{fig.starman_temporal_uncertainty}, \texttt{PRISM} accurately recovers the true conditional distribution $p(\tau \mid \boldsymbol{p}, t)$ under both global uncertainty (Starman~(G)) and local uncertainty (Starman~(L)). The predicted 95\% confidence intervals (\textcolor{red}{red}) align tightly with the ground truth (\textcolor{blue}{blue}). Minor discrepancies near the temporal boundaries are likely due to fewer training samples at the extremes.

We extend this analysis to the airway dataset in Fig.~\ref{fig.airway_temporal_uncertainty}. Since no ground truth temporal uncertainty is available, we overlay actual data points, with chronological age $t$ on the x-axis and predicted intrinsic time $\hat{\tau}$ on the y-axis. Most normal cases lie within the predicted intervals, validating the calibration of our uncertainty estimates. The uncertainty bands vary substantially across anatomical locations, demonstrating \texttt{PRISM}'s ability to capture spatially heterogeneous uncertainty. Soft tissue regions such as the base of the tongue exhibit wider bands, consistent with \citet{zhou2021intra}, who reported high intra-individual variation in these regions due to anatomical variability and imaging sensitivity. 

To verify that the learned temporal uncertainty produces meaningful geometric variations, we visualize reconstructed shapes at the bounds of the predicted distribution. As shown in Fig.~\ref{fig.generated_starman_shape}, we decode shapes at the mean ($\mu$) and at $\mu \pm 2\sigma$ (i.e., two standard deviations from the mean). The predicted contours (\textcolor{red}{red}) align well with the ground truth (\textcolor{blue}{blue}), confirming that both the learned mean trajectory and uncertainty estimates are accurate.

As shown in Fig.~\ref{fig.generated_airway_shape}, we decode shapes at $\mu$ and $\mu \pm 2\sigma$ and report the corresponding airway volumes. Almost all subjects with complete airways fall within this range, validating that both the mean trajectory and temporal uncertainty learned by \texttt{PRISM} are reasonable for clinical data.

\paragraph{Personalized Shape Prediction.}

\begin{table}[t]
\centering
\caption{\textbf{Evaluation of personalized shape prediction.}}%For a point $\boldsymbol{p}$ on a shape at physical time $t$, we estimate the intrinsic time $\hat{\tau}=g(p, t)$ and temporal uncertainty $\sigma_\tau(p, t)$ to predict shapes at future time points (see Sec.~\ref{subsec.shape_analysis}). \texttt{PRISM} employs pointwise time estimation via inverse encoding, whereas baselines rely on a global time parameter estimated at test time. As a result, PRISM outperforms baselines on 3 out of 4 datasets.}
\resizebox{0.8\linewidth}{!}{%
\begin{tabular}{llccc}
\toprule
Dataset & Method & CD ($\downarrow$) & HD ($\downarrow$) & EMD ($\downarrow$) \\
\midrule
\multirow{3}{*}{Starman~(G)} & A-SDF & 0.165 & 6.818 & 0.189 \\
 & NAISR & 0.130 & 6.809 & 0.185 \\
 & \cellcolor{gray!15}\texttt{PRISM} & \cellcolor{gray!15}\textbf{0.072} & \cellcolor{gray!15}\textbf{6.273} & \cellcolor{gray!15}\textbf{0.092} \\
\midrule
\multirow{3}{*}{Starman~(L)} & A-SDF & 0.467 & 8.742 & 0.351 \\
 & NAISR & 0.595 & 11.143 & 0.450 \\
 & \cellcolor{gray!15}\texttt{PRISM} & \cellcolor{gray!15}\textbf{0.099} & \cellcolor{gray!15}\textbf{8.178} & \cellcolor{gray!15}\textbf{0.148} \\
\midrule
\multirow{3}{*}{ANNY} & A-SDF & 0.081 & 4.170 & 0.866 \\
 & NAISR & 0.044 & 7.657 & 1.042 \\
 & \cellcolor{gray!15}\texttt{PRISM} & \cellcolor{gray!15}\textbf{0.018} & \cellcolor{gray!15}\textbf{3.017} & \cellcolor{gray!15}\textbf{0.560} \\
\midrule
\multirow{3}{*}{Airway} & A-SDF & 0.123 & 10.762 & 2.034 \\
 & NAISR & 0.065 & 9.599 & 1.351 \\
 & \cellcolor{gray!15}\texttt{PRISM} & \cellcolor{gray!15}\textbf{0.059} & \cellcolor{gray!15}\textbf{9.189} & \cellcolor{gray!15}\textbf{1.243} \\
\bottomrule
\end{tabular}%
}
\label{tab:longitudinal}
\end{table}

%Given a shape at time $t$, we predict its future development by estimating the intrinsic time $\hat{\tau}$ and propagating forward along the learned trajectory as Eq.~\eqref{eq.long_pred}. For baselines, we optimize a global $\hat{\tau}$ at test time; for \texttt{PRISM}, we use the pointwise estimate from the inverse encoder. As shown in Tab.~\ref{tab:longitudinal}, \texttt{PRISM} outperforms the baselines across all datasets, demonstrating that accurate local time estimation enables reliable shape forecasting. This partially reverses the reconstruction trend in Tab.~\ref{tab.quantitative_shape_traj}: longitudinal prediction extrapolates each subject forward in time, and A-SDF, which maps covariates directly to signed-distance values without a geometric prior, degrades once queried beyond its training range, whereas template-based deformatable shape representations such as NAISR and \texttt{PRISM} remain bounded under extrapolation.

Given a shape at time $t$, we predict its future evolution by estimating the intrinsic time $\hat{\tau}$ and propagating it forward along the learned trajectory following Eq.~\eqref{eq.long_pred}. For the baselines, a global $\hat{\tau}$ is optimized at test time, whereas \texttt{PRISM} leverages pointwise estimates directly from the inverse encoder. As shown in Tab.~\ref{tab:longitudinal}, \texttt{PRISM} performs best on longitudinal prediction. This task requires extrapolating each subject forward in time, where A-SDF---which maps covariates to signed-distance values without a geometric prior---degrades sharply
beyond its training range, whereas template-based deformable representations (NAISR, \texttt{PRISM}) remain bounded.

\paragraph{OOD Detection.}
\begin{table}[t]
\centering
\caption{\textbf{Evaluation of OOD detection on Pediatric Airway dataset.}}
\label{tab.ood}
\resizebox{\linewidth}{!}{
\begin{tabular}{llcccc}
\toprule
Method & Time $\tau$ & AUC ($\uparrow$) & Sens ($\uparrow$) & Spec ($\uparrow$) & Acc ($\uparrow$) \\
\midrule
A-SDF & Global & 0.270 & \textbf{1.000} & 0.000 & 0.185 \\
NAISR & Global & 0.605 & 0.613 & 0.650 & 0.643 \\
\rowcolor{gray!15}\texttt{PRISM} & Global & 0.502 & 0.774 & 0.401 & 0.470 \\
\rowcolor{gray!15}\texttt{PRISM} & Local & \textbf{0.875} & 0.806 & \textbf{0.869} & \textbf{0.857} \\
\bottomrule
\end{tabular}
}
\end{table}

As shown in Tab.~\ref{tab.ood}, \texttt{PRISM} with local scoring substantially outperforms all global-time methods. Notably, \texttt{PRISM} (Global) underperforms NAISR, yet \texttt{PRISM} (Local) achieves the best results. This highlights a limitation of global approaches: airway development varies across individuals, anatomical locations, and time, which a single global rate cannot capture. Local estimates detect deviations within each subject's own anatomy, avoiding inter-individual confounds. More details about baseline adaptions for OOD is available in Appendix~\ref{app.adap}.

\paragraph{Empirical Validation: $I_\mu$ vs $I_{\text{full}}$.}
\begin{figure}[t]
    \centering
    \includegraphics[width=0.9\linewidth]{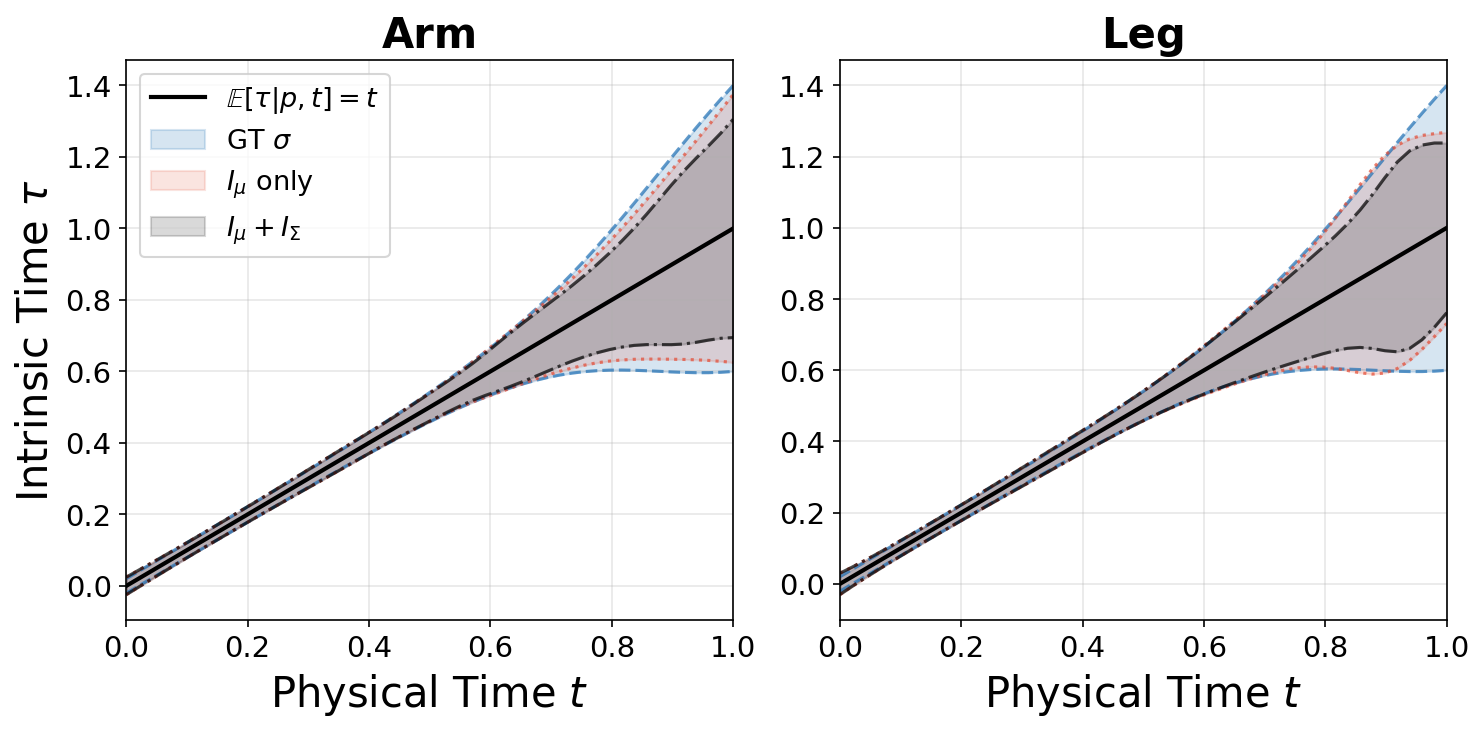}
    \caption{\textbf{Ablation of $I_\mu$ vs.\ $I_{\text{full}}$ on Starman~(G).}
    Uncertainty bands from $I_\mu$ (\textcolor{red}{red}) and $I_{\text{full}} = I_\mu + I_\Sigma$ (gray), compared to ground truth (\textcolor{blue}{blue}) at the arm (left) and leg (right) control points. 
    $I_\mu$ tracks the ground truth, while $I_{\text{full}}$ underestimates variability: $I_\Sigma$ tightens the bound but does not capture our target---localizing individuals along the mean trajectory.}
    \label{fig:imu_vs_ifull}
\end{figure}
We empirically verify our choice of $I_\mu$ over the full Fisher Information $I_{\text{full}} = I_\mu + I_\Sigma$ as the temporal uncertainty metric. On Starman~(G), where ground-truth uncertainty is available, the $I_\mu$-derived bands closely track the ground truth, while $I_{\text{full}}$ systematically underestimates the true variability (Fig.~\ref{fig:imu_vs_ifull}). This confirms our choice: $I_\Sigma$ tightens the Cramér-Rao bound by accounting for changes in population dispersion, but this does not reflect our target quantity—the precision of localizing an individual along the mean trajectory. We therefore use $I_\mu$ alone as the temporal uncertainty metric.

\section{Limitations and Future Work}
% First, our current framework conditions shape evolution along a single scalar covariate (e.g., chronological age/physical time). However, anatomical changes are rarely driven by a single factor; they often result from complex interactions between multiple variables, such as genetic markers, environmental exposures, and therapeutic interventions. Extending our information-theoretic approach to high-dimensional covariate spaces is a non-trivial but essential next step. We envision developing multi-covariate information geometry-based uncertainty for 3D shapes. This would offer a more granular understanding of biological drivers.

% Second, while our experiments demonstrate robust anomaly detection in pediatric airways, the model’s capacity for long-term forecasting in degenerative conditions remains underexplored. Stochasticity often increases with biological age, making trajectory prediction increasingly difficult. Future work will focus on personalized predictive modeling for aging-related diseases, leveraging \texttt{PRISM}’s heteroscedastic uncertainty maps to predict subject-specific disease progression. By integrating longitudinal priors, we aim to transition from population-level atlases to personalized digital twins that can forecast structural degeneration (e.g., in neurodegeneration or osteoarthritis) with calibrated confidence intervals. 

First, our framework currently conditions shape evolution on a single scalar covariate (e.g., chronological age). Extending our information-theoretic approach to high-dimensional covariate spaces is essential future work. Second, while our experiments demonstrate robust anomaly detection in pediatric airways, the model’s capacity for long-term forecasting in degenerative conditions remains underexplored. Stochasticity often increases with biological age, making trajectory prediction increasingly difficult. Future work will focus on personalized predictive modeling for aging-related diseases, leveraging \texttt{PRISM}’s heteroscedastic uncertainty maps to predict subject-specific disease progression.

\section{Conclusion}
We introduced \texttt{PRISM}, the first implicit neural representation framework for uncertainty-aware statistical shape analysis. Our approach captures conditional shape distributions, estimates intrinsic developmental time, and quantifies spatially varying temporal uncertainty. A key theoretical contribution is the closed-form Fisher Information metric for analytically quantifying temporal uncertainty, which can be efficiently computed via automatic differentiation inherent to implicit neural representations. Experiments on both synthetic and clinical datasets demonstrate that \texttt{PRISM} is a versatile framework that delivers consistent performance across diverse medical tasks, including developmental forecasting and anomaly detection. %The resulting uncertainty estimates are interpretable and anatomically meaningful, providing clinicians with intuitive spatial maps of developmental variability.

\clearpage
\newpage
\section*{Acknowledgments}
This research was, in part, funded by the National Institutes of Health (NIH) under 1R21HL172230-01A1. The views and conclusions contained in this document are those of the authors and should not be interpreted as representing official policies, either expressed or implied, of the NIH.

\section*{Impact Statement}

\texttt{PRISM} was motivated by critical challenges in pediatric airway 
analysis: modeling longitudinal shape change, quantifying population-level 
uncertainty conditioned on covariates such as age, and detecting 
out-of-distribution cases. However, the method extends well beyond 
airways, applying to other anatomical shape analysis problems. More 
generally, whenever a covariate's effect on geometry is uncertain and the 
geometry is represented by an implicit neural representation (INR), 
\texttt{PRISM}'s Fisher-information-based formulation provides a 
principled framework to propagate uncertainty from outputs back to inputs.

While our current validation focuses on growth-driven shape change---where the covariate exerts a dominant effect on geometry---\texttt{PRISM}'s predictions should be interpreted with caution when individual variation approaches or exceeds covariate-driven variation. Disentangling individual-specific traits from covariate-driven dynamics remains future work. Consequently, clinical translation must adhere to these validation boundaries and requires external validation on the target population.

% In the unusual situation where you want a paper to appear in the
% references without citing it in the main text, use \nocite
\clearpage
\newpage
%\nocite{langley00}

\bibliography{example_paper}
\bibliographystyle{icml2026}
%%%%%%%%%%%%%%%%%%%%%%%%%%%%%%%%%%%%%%%%%%%%%%%%%%%%%%%%%%%%%%%%%%%%%%%%%%%%%%%
%%%%%%%%%%%%%%%%%%%%%%%%%%%%%%%%%%%%%%%%%%%%%%%%%%%%%%%%%%%%%%%%%%%%%%%%%%%%%%%
% APPENDIX
%%%%%%%%%%%%%%%%%%%%%%%%%%%%%%%%%%%%%%%%%%%%%%%%%%%%%%%%%%%%%%%%%%%%%%%%%%%%%%%
%%%%%%%%%%%%%%%%%%%%%%%%%%%%%%%%%%%%%%%%%%%%%%%%%%%%%%%%%%%%%%%%%%%%%%%%%%%%%%%

\newpage
\appendix
\onecolumn
% PRISM Appendix for ICML 2026
% Fisher Information Derivation and Cramér-Rao Lower Bound Proof

\section{Theory and Methods}

\begin{table*}[h]
\centering
\caption{Summary of notation.}
\label{tab:notation}
\begin{small}
\begin{tabular}{cl}
\toprule
\textbf{Symbol} & \textbf{Description} \\
\midrule
\multicolumn{2}{l}{\textit{Data \& Geometry}} \\
$N$ & Number of shape observations \\
$\mathcal{Y}_i$ & The $i$-th observed shape \\
$\mathcal{T}$ & Reference template shape \\
$\Omega \subset \mathbb{R}^3$ & Canonical domain \\
$\boldsymbol{p}$ & Template coordinate \\
$\boldsymbol{d}$ & Displacement vector \\
$\boldsymbol{y}$ & Target position ($\boldsymbol{y} = \boldsymbol{p} + \boldsymbol{d}$) \\
$\boldsymbol{\phi}_i$ & Displacement field for subject $i$ \\
\midrule
\multicolumn{2}{l}{\textit{Time Variables}} \\
$t$ & Chronological time (observed covariate) \\
$\tau$ & Intrinsic time (latent developmental stage) \\
$\hat{\tau}$ & Estimated intrinsic time \\
\midrule
\multicolumn{2}{l}{\textit{Distribution Parameters}} \\
$\mu(\boldsymbol{p}, t)$ & Mean displacement trajectory \\
$\Sigma(\boldsymbol{p}, t)$ & Covariance matrix \\
$p(\boldsymbol{d} \mid \boldsymbol{p}, t)$ & Conditional displacement distribution \\
\midrule
\multicolumn{2}{l}{\textit{Fisher Information}} \\
$U$ & Score function \\
$I(\boldsymbol{p}, t)$ & Fisher Information \\
$\sigma^2_\tau(\boldsymbol{p}, t)$ & Temporal variability ($\approx I^{-1}$) \\
\midrule
\multicolumn{2}{l}{\textit{Networks}} \\
$f(\cdot)$ & Forward model (outputs $\boldsymbol{\mu}, \boldsymbol{\Sigma}$) \\
$g(\cdot)$ & Inverse encoder (outputs $\hat{\tau}$) \\
\bottomrule
\end{tabular}
\end{small}
\label{app:tab.prism_notations}
\end{table*}

\subsection{Template-based Point Correspondence}
\label{sec:correspondence}

All components of \texttt{PRISM} rely on dense point correspondence 
across subjects, where each template coordinate $\boldsymbol{p}$ maps 
to anatomically equivalent locations across all shapes. We establish this correspondence via a template-based registration module that learns invertible 
deformations between a shared template and individual anatomies. 
After training, the learned deformations serve as pseudo ground-truth 
supervision for \texttt{PRISM}, which predicts displacements conditioned 
on covariates such as age.

\paragraph{Architecture.}
The registration model employs two hypernetwork-based architectures to model bidirectional deformations. For each training instance $\mathcal{Y}_i$, we learn two latent codes: $\boldsymbol{z}_i^{\text{fwd}} \in \mathbb{R}^{1024}$ for the forward (target-to-template) direction and $\boldsymbol{z}_i^{\text{bwd}} \in \mathbb{R}^{1024}$ for the backward (template-to-target) direction. Each hypernetwork $\mathcal{H}$ takes a latent code as input and outputs the weights of a SIREN-based displacement network $\mathcal{D}$~\citep{deng2021difnet}. Unlike DIF-Net, we learn two separate networks to ensure invertibility.

For the forward direction, given a point $\boldsymbol{y}$ in the target space, the corresponding point in template space $\boldsymbol{p}$ is computed as
\begin{equation}
    \boldsymbol{p} = \Phi_{\text{fwd}}(\boldsymbol{y}) = \boldsymbol{y} + \mathcal{D}_{\text{fwd}}(\boldsymbol{y}; \mathcal{H}_{\text{fwd}}(\boldsymbol{z}_i^{\text{fwd}})).
\end{equation}
For the backward direction, given a point $\boldsymbol{p}$ in the template space, the recovered target point $\boldsymbol{y}'$ is computed as
\begin{equation}
    \boldsymbol{y}' = \Phi_{\text{bwd}}(\boldsymbol{p}) = \boldsymbol{p} + \mathcal{D}_{\text{bwd}}(\boldsymbol{p}; \mathcal{H}_{\text{bwd}}(\boldsymbol{z}_i^{\text{bwd}})).
\end{equation}
A shared template network $\mathcal{T}(\cdot)$ maps points in the template space to signed distance values, implicitly representing the canonical anatomy. The composed mapping $\mathcal{T} \circ \Phi_{\text{fwd}}: \boldsymbol{y} \mapsto \mathcal{T}(\Phi_{\text{fwd}}(\boldsymbol{y}))$ thus predicts the signed distance for any query point $\boldsymbol{y}$ in the target space.

\paragraph{Training Objectives.}
The registration model is trained with SDF reconstruction losses and invertibility constraints, following the formulation in SIREN~\citep{sitzmann2020siren}.

\textit{SDF Reconstruction Loss.}
For each training sample, we are given a set of query points $\{\boldsymbol{y}_j\}$ sampled in the target space, along with their ground-truth signed distance values $\{s_j\}$ and surface normals $\{\boldsymbol{n}_j\}$. The composed mapping $\mathcal{T} \circ \Phi_{\text{fwd}}$ first deforms each query point $\boldsymbol{y}_j$ to the template space, then evaluates the template SDF, yielding the predicted signed distance $\hat{s}_j = (\mathcal{T} \circ \Phi_{\text{fwd}})(\boldsymbol{y}_j)$. The SDF reconstruction loss minimizes the discrepancy between the predicted and ground-truth signed distances as
\begin{equation}
    \mathcal{L}_{\text{sdf}} = \frac{1}{M} \sum_{j} | (\mathcal{T} \circ \Phi_{\text{fwd}})(\boldsymbol{y}_j) - s_j |.
\end{equation}

\textit{Normal Alignment Loss.}
The gradient of the composed mapping $\mathcal{T} \circ \Phi_{\text{fwd}}$ with respect to the query point $\boldsymbol{y}_j$ should align with the ground-truth surface normal $\boldsymbol{n}_j$. The normal alignment loss penalizes deviations from perfect alignment as
\begin{equation}
    \mathcal{L}_{\text{norm}} = \frac{1}{M} \sum_{j} \left( 1 - \frac{\nabla_{\boldsymbol{y}} (\mathcal{T} \circ \Phi_{\text{fwd}})(\boldsymbol{y}_j) \cdot \boldsymbol{n}_j}{\|\nabla_{\boldsymbol{y}} (\mathcal{T} \circ \Phi_{\text{fwd}})(\boldsymbol{y}_j)\| \, \|\boldsymbol{n}_j\|} \right).
\end{equation}

\textit{Eikonal Regularization.}
A valid signed distance function must satisfy unit gradient norm. The eikonal loss enforces this constraint as
\begin{equation}
    \mathcal{L}_{\text{eik}} = \frac{1}{M} \sum_{j} \big| \|\nabla_{\boldsymbol{y}} (\mathcal{T} \circ \Phi_{\text{fwd}})(\boldsymbol{y}_j)\| - 1 \big|.
\end{equation}

\textit{Invertibility Constraints.}
The above SDF losses only constrain the forward deformation $\Phi_{\text{fwd}}$. To ensure the backward deformation $\Phi_{\text{bwd}}$ is the true inverse, we employ cycle consistency losses. Let $\Phi_{\text{rt}} = \Phi_{\text{bwd}} \circ \Phi_{\text{fwd}}$ denote the round-trip transformation that maps a target point $\boldsymbol{y}$ to template space and back. For a perfect inverse, $\Phi_{\text{rt}}(\boldsymbol{y}) = \boldsymbol{y}$.

The ICON loss~\citep{greer2021icon} directly penalizes the round-trip displacement as
\begin{equation}
    \mathcal{L}_{\text{icon}} = \frac{1}{M} \sum_{j} \| \Phi_{\text{rt}}(\boldsymbol{y}_j) - \boldsymbol{y}_j \|_2.
\end{equation}
The GradICON loss~\citep{tian2022gradicon} enforces that the Jacobian of the round-trip transformation equals identity as
\begin{equation}
    \mathcal{L}_{\text{grad}} = \frac{1}{M} \sum_{j} \| \nabla_{\boldsymbol{y}} \Phi_{\text{rt}}(\boldsymbol{y}_j) - \boldsymbol{I} \|_F.
\end{equation}
Both losses are necessary: GradICON alone ensures locally correct derivatives but can admit a global translation offset, while ICON anchors the absolute positions.

\textit{Total Loss.}
The total training objective is
\begin{equation}
    \mathcal{L}_{\text{total}} = \lambda_{\text{sdf}} \mathcal{L}_{\text{sdf}} + \lambda_{\text{norm}} \mathcal{L}_{\text{norm}} + \lambda_{\text{eik}} \mathcal{L}_{\text{eik}} + \lambda_{\text{icon}} \mathcal{L}_{\text{icon}} + \lambda_{\text{grad}} \mathcal{L}_{\text{grad}},
\end{equation}
with $\lambda_{\text{sdf}} = 3000$, $\lambda_{\text{norm}} = 100$, $\lambda_{\text{eik}} = 50$, and $\lambda_{\text{icon}} = \lambda_{\text{grad}} = 100$.

After training, we extract the template-to-target deformations $\Phi_{\text{bwd}}$ as pseudo ground-truth for training \texttt{PRISM}. Specifically, for each subject $\mathcal{Y}_i$ with chronological time $t_i$, we sample template points $\boldsymbol{p}$ and compute the displacement as $\boldsymbol{d} = \Phi_{\text{bwd}}^{(i)}(\boldsymbol{p}) - \boldsymbol{p}$, yielding training pairs $\{(\boldsymbol{p}, \boldsymbol{d}, t_i)\}$ for \texttt{PRISM} as shown in Fig.~\ref{fig.proprocess_airway_in_3steps}(c).

\subsection{Derivation of the Fisher Information Metric}
\label{app:fisher}
This appendix provides self-contained derivations of the Fisher Information metric and the Cram\'er-Rao lower bound for the heteroscedastic Gaussian model introduced in Sec.~\ref{sec:methodology}. The derivations follow standard techniques from estimation theory~\citep{cramer1999mathematical,rao1945information}, information geometry~\citep{amari2016information}, and matrix calculus~\citep{petersen2008matrix}. The closed-form Fisher Information for multivariate normal distributions is a classical result~\citep{skovgaard1984riemannian,nielsen2023simple}.

\subsubsection{Score Function Derivation}
\label{app:score}

We begin with the conditional distribution of displacement $\boldsymbol{d}$ at template coordinate $\boldsymbol{p}$ given chronological time $t$:
\begin{equation}
p(\boldsymbol{d} \mid \boldsymbol{p}, t) = \mathcal{N}(\mu(\boldsymbol{p},t), \Sigma(\boldsymbol{p},t)).
\end{equation}
The log-likelihood function for $p(\boldsymbol{d} \mid \boldsymbol{p}, t)$ is
\begin{equation}
\ell(\boldsymbol{d}; \boldsymbol{p}, t) = \log p(\boldsymbol{d} \mid \boldsymbol{p}, t) = -\frac{1}{2}(\boldsymbol{d} - \boldsymbol{\mu})^\top \Sigma^{-1}(\boldsymbol{d} - \boldsymbol{\mu}) - \frac{1}{2}\log\det\Sigma - \frac{3}{2}\log(2\pi),
\end{equation}
where $\boldsymbol{\mu} := \mu(\boldsymbol{p},t)$ denotes the mean vector and $\Sigma := \Sigma(\boldsymbol{p},t)$ denotes the covariance matrix, both depending on $t$. We denote their temporal derivatives as $\boldsymbol{\mu}_t := \partial_t \mu(\boldsymbol{p},t)$ and $\Sigma_t := \partial_t \Sigma(\boldsymbol{p},t)$.

To derive the score function $U(\boldsymbol{d}; \boldsymbol{p}, t) := \partial_t \ell(\boldsymbol{d}; \boldsymbol{p}, t)$, we utilize two standard matrix derivative identities:
\begin{align}
\partial_t \Sigma^{-1} &= -\Sigma^{-1}\Sigma_t\Sigma^{-1}, \\
\partial_t \log\det\Sigma &= \mathrm{tr}(\Sigma^{-1}\Sigma_t).
\end{align}

Define the residual $\boldsymbol{r} := r(\boldsymbol{p}, t) = \boldsymbol{d} - \mu(\boldsymbol{p}, t)$. Since $\boldsymbol{d}$ is fixed during differentiation, $\partial_t \boldsymbol{r} = -\boldsymbol{\mu}_t$. 

We write the log-likelihood as 
\begin{equation}
\ell(\boldsymbol{d}; \boldsymbol{p}, t) = -\frac{1}{2}Q(\boldsymbol{p}, t) - \frac{1}{2}\log\det\Sigma(\boldsymbol{p}, t)  + C \,,
\end{equation} 

where $Q(\boldsymbol{p}, t) := \boldsymbol{r}^\top \Sigma^{-1} \boldsymbol{r}$. To derive the score function $U(\boldsymbol{d}; \boldsymbol{p}, t)$, we differentiate $Q$ using the product rule as

\begin{align}
\partial_t Q(\boldsymbol{p}, t) &= (\partial_t \boldsymbol{r})^\top \Sigma^{-1} \boldsymbol{r} + \boldsymbol{r}^\top (\partial_t \Sigma^{-1}) \boldsymbol{r} + \boldsymbol{r}^\top \Sigma^{-1} (\partial_t \boldsymbol{r}) \nonumber \\
&= (-\boldsymbol{\mu}_t)^\top \Sigma^{-1} \boldsymbol{r} + \boldsymbol{r}^\top (-\Sigma^{-1}\Sigma_t\Sigma^{-1}) \boldsymbol{r} + \boldsymbol{r}^\top \Sigma^{-1} (-\boldsymbol{\mu}_t) \nonumber \\
&= -2\boldsymbol{\mu}_t^\top \Sigma^{-1} \boldsymbol{r} - \boldsymbol{r}^\top \Sigma^{-1}\Sigma_t\Sigma^{-1} \boldsymbol{r},
\end{align}
where we used the symmetry of the quadratic form. The score function is therefore
\begin{equation}
\label{eq:score}
U(\boldsymbol{d}; \boldsymbol{p}, t) = \boldsymbol{\mu}_t^\top \Sigma^{-1} \boldsymbol{r} + \frac{1}{2}\boldsymbol{r}^\top \Sigma^{-1}\Sigma_t\Sigma^{-1} \boldsymbol{r} - \frac{1}{2}\mathrm{tr}(\Sigma^{-1}\Sigma_t).
\end{equation}

\subsubsection{Fisher Information Computation}
\label{app:fisher_computation}

The Fisher Information quantifies how much information an observation $\boldsymbol{d}$ carries about the time parameter $t$. At $(\boldsymbol{p}, t)$, it is defined as the expected squared score:
\begin{equation}
I(\boldsymbol{p}, t) := \mathbb{E}_{\boldsymbol{d} \sim p(\boldsymbol{d}|\boldsymbol{p},t)}\left[U(\boldsymbol{d}; \boldsymbol{p}, t)^2\right].
\end{equation}

Since $\boldsymbol{r} = \boldsymbol{d} - \boldsymbol{\mu} \sim \mathcal{N}(\boldsymbol{0}, \Sigma)$, we can express the score function (Eq.~\eqref{eq:score}) in terms of the centered residual $\boldsymbol{r}$. For notational convenience, we define:
\begin{align}
A &:= \Sigma^{-1}\Sigma_t\Sigma^{-1}, \\
B &:= \frac{1}{2}\mathrm{tr}(\Sigma^{-1}\Sigma_t).
\end{align}
The score function then decomposes as
\begin{equation}
U(\boldsymbol{r}) = U_{\text{linear}}(\boldsymbol{r}) + U_{\text{quadratic}}(\boldsymbol{r}),
\end{equation}
where
\begin{align}
U_{\text{linear}}(\boldsymbol{r}) &:= \boldsymbol{\mu}_t^\top \Sigma^{-1} \boldsymbol{r}, \\
U_{\text{quadratic}}(\boldsymbol{r}) &:= \frac{1}{2}\boldsymbol{r}^\top A \boldsymbol{r} - B.
\end{align}

Expanding the square, the Fisher Information becomes
\begin{equation}
I(\boldsymbol{p}, t) = \mathbb{E}[U^2] = \mathbb{E}[U_{\text{linear}}^2] + 2\mathbb{E}[U_{\text{linear}} U_{\text{quadratic}}] + \mathbb{E}[U_{\text{quadratic}}^2].
\end{equation}
We evaluate each term separately.

%==============================================================================
\paragraph{Cross-Term Vanishes}
%==============================================================================

We first show that the mixed term $\mathbb{E}[U_{\text{linear}} U_{\text{quadratic}}]$ equals zero. Since $B$ is a constant and $\mathbb{E}[U_{\text{linear}}] = \boldsymbol{\mu}_t^\top \Sigma^{-1} \mathbb{E}[\boldsymbol{r}] = 0$, we have
\begin{equation}
\mathbb{E}[U_{\text{linear}} U_{\text{quadratic}}] = \mathbb{E}\left[U_{\text{linear}} \cdot \left(\frac{1}{2}\boldsymbol{r}^\top A \boldsymbol{r} - B\right)\right] = \frac{1}{2}\mathbb{E}\left[U_{\text{linear}} \cdot \boldsymbol{r}^\top A \boldsymbol{r}\right].
\end{equation}

To evaluate this expectation, we write $U_{\text{linear}}$ in component form. Let $\boldsymbol{b} := \Sigma^{-1}\boldsymbol{\mu}_t$, so that
\begin{equation}
U_{\text{linear}} = \boldsymbol{b}^\top \boldsymbol{r} = \sum_{k=1}^{3} b_k r_k.
\end{equation}
Similarly, the quadratic form expands as
\begin{equation}
\boldsymbol{r}^\top A \boldsymbol{r} = \sum_{i=1}^{3} \sum_{j=1}^{3} A_{ij} r_i r_j.
\end{equation}
Therefore,
\begin{equation}
\mathbb{E}\left[U_{\text{linear}} \cdot \boldsymbol{r}^\top A \boldsymbol{r}\right] = \sum_{i,j,k} b_k A_{ij} \mathbb{E}[r_k r_i r_j].
\end{equation}

For a centered multivariate Gaussian $\boldsymbol{r} \sim \mathcal{N}(\boldsymbol{0}, \Sigma)$, all odd-order central moments vanish. In particular, the third-order moment satisfies
\begin{equation}
\mathbb{E}[r_k r_i r_j] = 0 \quad \text{for all } i, j, k.
\end{equation}
This follows from the symmetry of the Gaussian distribution: for any odd function $f(\boldsymbol{r})$, we have $\mathbb{E}[f(\boldsymbol{r})] = 0$. Since $U_{\text{linear}}$ is linear (odd) in $\boldsymbol{r}$ and $\boldsymbol{r}^\top A \boldsymbol{r}$ is quadratic (even) in $\boldsymbol{r}$, their product is an odd function, yielding
\begin{equation}
\boxed{\mathbb{E}[U_{\text{linear}} U_{\text{quadratic}}] = 0.}
\end{equation}

%==============================================================================
\paragraph{Linear Term Contribution}
%==============================================================================

We now compute $\mathbb{E}[U_{\text{linear}}^2]$. Since $\mathbb{E}[\boldsymbol{r}] = \boldsymbol{0}$, we have $\mathbb{E}[U_{\text{linear}}] = 0$, thus
\begin{equation}
\mathbb{E}[U_{\text{linear}}^2] = \mathrm{Var}(U_{\text{linear}}).
\end{equation}

Recall that $U_{\text{linear}} = \boldsymbol{\mu}_t^\top \Sigma^{-1} \boldsymbol{r}$. Since this is a scalar, we can write
\begin{equation}
U_{\text{linear}}^2 = (\boldsymbol{\mu}_t^\top \Sigma^{-1} \boldsymbol{r})(\boldsymbol{r}^\top \Sigma^{-1} \boldsymbol{\mu}_t) = \boldsymbol{r}^\top \Sigma^{-1} \boldsymbol{\mu}_t \boldsymbol{\mu}_t^\top \Sigma^{-1} \boldsymbol{r}.
\end{equation}

To evaluate the expectation, we use the standard result for quadratic forms of Gaussian vectors. For $\boldsymbol{r} \sim \mathcal{N}(\boldsymbol{0}, \Sigma)$ and any symmetric matrix $M$,
\begin{equation}
\mathbb{E}[\boldsymbol{r}^\top M \boldsymbol{r}] = \mathrm{tr}(M \Sigma).
\end{equation}
Applying this with $M = \Sigma^{-1} \boldsymbol{\mu}_t \boldsymbol{\mu}_t^\top \Sigma^{-1}$:
\begin{align}
\mathbb{E}[U_{\text{linear}}^2] &= \mathrm{tr}\left(\Sigma^{-1} \boldsymbol{\mu}_t \boldsymbol{\mu}_t^\top \Sigma^{-1} \Sigma\right) \nonumber \\
&= \mathrm{tr}\left(\Sigma^{-1} \boldsymbol{\mu}_t \boldsymbol{\mu}_t^\top\right) \nonumber \\
&= \mathrm{tr}\left(\boldsymbol{\mu}_t^\top \Sigma^{-1} \boldsymbol{\mu}_t\right) \quad \text{(cyclic property of trace)} \nonumber \\
&= \boldsymbol{\mu}_t^\top \Sigma^{-1} \boldsymbol{\mu}_t \quad \text{(trace of a scalar is itself)}.
\end{align}
Therefore,
\begin{equation}
\boxed{\mathbb{E}[U_{\text{linear}}^2] = \boldsymbol{\mu}_t^\top \Sigma^{-1} \boldsymbol{\mu}_t.}
\end{equation}

%==============================================================================
\paragraph{Quadratic Term Contribution}
%==============================================================================

We now compute $\mathbb{E}[U_{\text{quadratic}}^2]$. Recall that
\begin{equation}
U_{\text{quadratic}} = \frac{1}{2}Q - B, \quad \text{where } Q := \boldsymbol{r}^\top A \boldsymbol{r}.
\end{equation}
We first derive the mean and variance of $Q$, then use these to obtain $\mathbb{E}[U_{\text{quadratic}}^2]$.

\paragraph{Expectation of $Q$.} Using the trace formula for quadratic forms:
\begin{equation}
\mathbb{E}[Q] = \mathbb{E}[\boldsymbol{r}^\top A \boldsymbol{r}] = \mathrm{tr}(A \Sigma).
\end{equation}
Substituting $A = \Sigma^{-1}\Sigma_t\Sigma^{-1}$:
\begin{equation}
\mathbb{E}[Q] = \mathrm{tr}(\Sigma^{-1}\Sigma_t\Sigma^{-1} \cdot \Sigma) = \mathrm{tr}(\Sigma^{-1}\Sigma_t) = 2B.
\end{equation}
Therefore, $\mathbb{E}[U_{\text{quadratic}}] = \frac{1}{2}\mathbb{E}[Q] - B = \frac{1}{2}(2B) - B = 0$.

\paragraph{Variance of $Q$.} To compute $\mathrm{Var}(Q) = \mathbb{E}[Q^2] - (\mathbb{E}[Q])^2$, we need $\mathbb{E}[Q^2]$. Expanding:
\begin{equation}
Q^2 = \left(\sum_{i,j} A_{ij} r_i r_j\right)\left(\sum_{k,l} A_{kl} r_k r_l\right) = \sum_{i,j,k,l} A_{ij} A_{kl} r_i r_j r_k r_l.
\end{equation}
Taking expectations:
\begin{equation}
\mathbb{E}[Q^2] = \sum_{i,j,k,l} A_{ij} A_{kl} \mathbb{E}[r_i r_j r_k r_l].
\end{equation}

For a centered multivariate Gaussian, the fourth-order moments are given by \textbf{Isserlis' theorem}~\cite{isserlis1918formula} as
\begin{equation}
\mathbb{E}[r_i r_j r_k r_l] = \Sigma_{ij}\Sigma_{kl} + \Sigma_{ik}\Sigma_{jl} + \Sigma_{il}\Sigma_{jk}.
\end{equation}
Substituting this into $\mathbb{E}[Q^2]$ yields three terms:
\begin{equation}
\mathbb{E}[Q^2] = T_1 + T_2 + T_3,
\end{equation}
where
\begin{align}
T_1 &:= \sum_{i,j,k,l} A_{ij} A_{kl} \Sigma_{ij} \Sigma_{kl}, \\
T_2 &:= \sum_{i,j,k,l} A_{ij} A_{kl} \Sigma_{ik} \Sigma_{jl}, \\
T_3 &:= \sum_{i,j,k,l} A_{ij} A_{kl} \Sigma_{il} \Sigma_{jk}.
\end{align}

\paragraph{Evaluation of $T_1$.} This term factorizes:
\begin{equation}
T_1 = \left(\sum_{i,j} A_{ij} \Sigma_{ij}\right)\left(\sum_{k,l} A_{kl} \Sigma_{kl}\right) = \left(\mathrm{tr}(A\Sigma)\right)^2 = (\mathbb{E}[Q])^2.
\end{equation}

\paragraph{Evaluation of $T_2$.} Define $C := A\Sigma = \Sigma^{-1}\Sigma_t$. Then:
\begin{align}
T_2 &= \sum_{i,j,k,l} A_{ij} A_{kl} \Sigma_{ik} \Sigma_{jl} \nonumber \\
&= \sum_{i,k} \left(\sum_j A_{ij} \Sigma_{jl}\right) \left(\sum_l A_{kl} \Sigma_{ik}\right) \nonumber \\
&= \sum_{i,k} C_{il} C_{ki} \quad \text{(where we sum over $j$ and $l$)} \nonumber \\
&= \mathrm{tr}(C^2) = \mathrm{tr}\left((A\Sigma)^2\right) = \mathrm{tr}\left((\Sigma^{-1}\Sigma_t)^2\right).
\end{align}

\paragraph{Evaluation of $T_3$.} By a similar index manipulation (using symmetry of $A$ and $\Sigma$):
\begin{equation}
T_3 = \sum_{i,j,k,l} A_{ij} A_{kl} \Sigma_{il} \Sigma_{jk} = \mathrm{tr}\left((A\Sigma)^2\right) = T_2.
\end{equation}

\paragraph{Combining the terms.} We obtain:
\begin{equation}
\mathbb{E}[Q^2] = T_1 + T_2 + T_3 = (\mathbb{E}[Q])^2 + 2\mathrm{tr}\left((\Sigma^{-1}\Sigma_t)^2\right).
\end{equation}
Therefore, the variance of $Q$ is:
\begin{equation}
\mathrm{Var}(Q) = \mathbb{E}[Q^2] - (\mathbb{E}[Q])^2 = 2\mathrm{tr}\left((\Sigma^{-1}\Sigma_t)^2\right).
\end{equation}

\paragraph{Back to $U_{\text{quadratic}}$.} Since $U_{\text{quadratic}} = \frac{1}{2}Q - B$ and $B$ is a constant:
\begin{equation}
\mathrm{Var}(U_{\text{quadratic}}) = \mathrm{Var}\left(\frac{1}{2}Q\right) = \frac{1}{4}\mathrm{Var}(Q) = \frac{1}{2}\mathrm{tr}\left((\Sigma^{-1}\Sigma_t)^2\right).
\end{equation}
Since $\mathbb{E}[U_{\text{quadratic}}] = 0$, we have:
\begin{equation}
\boxed{\mathbb{E}[U_{\text{quadratic}}^2] = \mathrm{Var}(U_{\text{quadratic}}) = \frac{1}{2}\mathrm{tr}\left((\Sigma^{-1}\Sigma_t)^2\right).}
\end{equation}

%==============================================================================
\subsubsection{Final Result: Closed-Form Fisher Information}
%==============================================================================

Combining all three contributions:
\begin{align}
I(\boldsymbol{p}, t) &= \mathbb{E}[U_{\text{linear}}^2] + 2\mathbb{E}[U_{\text{linear}} U_{\text{quadratic}}] + \mathbb{E}[U_{\text{quadratic}}^2] \nonumber \\
&= \boldsymbol{\mu}_t^\top \Sigma^{-1} \boldsymbol{\mu}_t + 0 + \frac{1}{2}\mathrm{tr}\left((\Sigma^{-1}\Sigma_t)^2\right).
\end{align}

\begin{tcolorbox}[colback=blue!5!white, colframe=blue!75!black, title=Closed-Form Fisher Information]
\begin{equation}
\label{eq:fisher_final}
I(\boldsymbol{p}, t) = \underbrace{\boldsymbol{\mu}_t^\top \Sigma^{-1} \boldsymbol{\mu}_t}_{I_\mu: \text{ Mean Term}} + \underbrace{\frac{1}{2}\mathrm{tr}\left((\Sigma^{-1}\Sigma_t)^2\right)}_{I_\Sigma: \text{ Covariance Term}}
\end{equation}
\end{tcolorbox}

\subsubsection{Proof of the Cram\'er--Rao Lower Bound}
\label{app:crlb}
\begin{theorem}[Cram\'er--Rao Lower Bound]
The variance of the intrinsic time $\tau$ conditioned on template location $\boldsymbol{p}$ and chronological time $t$ satisfies
\begin{equation}
\mathrm{Var}_{p(\boldsymbol{d} \mid \boldsymbol{p}, t)}(\tau) \geq \frac{1}{I(\boldsymbol{p}, t)}.
\end{equation}
\end{theorem}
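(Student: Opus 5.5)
The plan is to run the standard covariance (Cauchy--Schwarz) argument for the Cram\'er--Rao bound. The estimator is the intrinsic time viewed as a statistic of the observation, $\tau = \tau(\boldsymbol{d})$; concretely one may take $\tau(\boldsymbol{d}) = g(\boldsymbol{p},\boldsymbol{d})$, the intrinsic time read off from the displacement. We use the modeling assumption stated before Eq.~\eqref{eq.cramer_rao}: $\mathbb{E}_{p(\boldsymbol{d}\mid\boldsymbol{p},t)}[\tau(\boldsymbol{d})] = t$ for all $t$ in an open interval. Thus $\tau$ is unbiased for $t$, and the target inequality is exactly the classical bound with $I(\boldsymbol{p},t)$ the Fisher information computed in Eq.~\eqref{eq:fisher_final}.

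\textbf{Step 1: the score has zero mean.} Write $p_t(\boldsymbol{d}) := p(\boldsymbol{d}\mid\boldsymbol{p},t)$. Differentiating $\int p_t(\boldsymbol{d})\,d\boldsymbol{d} = 1$ in $t$ and exchanging derivative and integral gives
\[
\mathbb{E}[U] = \int \partial_t p_t \, d\boldsymbol{d} = 0 .
\]
This is also visible directly from the decomposition in Appendix~\ref{app:fisher_computation}, since $\mathbb{E}[U_{\text{linear}}] = \mathbb{E}[U_{\text{quadratic}}] = 0$.

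\textbf{Step 2: the covariance identity.} Differentiate the unbiasedness identity $\int \tau(\boldsymbol{d})\,p_t(\boldsymbol{d})\,d\boldsymbol{d} = t$ in $t$. Exchanging derivative and integral, and using $\partial_t p_t = p_t\,U$, gives $\mathbb{E}[\tau U] = 1$. Combined with $\mathbb{E}[U]=0$ this yields
\[
\mathrm{Cov}(\tau, U) = \mathbb{E}[(\tau - t)U] = 1 .
\]

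\textbf{Step 3: Cauchy--Schwarz.} Apply Cauchy--Schwarz to get
\[
1 = \mathrm{Cov}(\tau,U)^2 \le \mathrm{Var}(\tau)\,\mathrm{Var}(U) = \mathrm{Var}(\tau)\, I(\boldsymbol{p},t),
\]
because $\mathrm{Var}(U) = \mathbb{E}[U^2] = I$. Dividing by $I>0$ gives the claim. Positivity of $I$ holds whenever $\boldsymbol{\mu}_t \neq 0$ or $\Sigma_t \neq 0$, since $\Sigma \succ 0$ by the Cholesky construction. If $I = 0$ the bound is read as $+\infty$, and no unbiased $\tau$ can exist, because Step~2 would then give $1 = \mathrm{Cov}(\tau,U)$ with $U=0$ a.s. Note also that the bound with $I = I_{\text{full}}$ implies the weaker bound with $I_\mu$ alone, since $I_\Sigma \ge 0$.

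\textbf{Main obstacle: the regularity conditions.} The algebra is routine; the delicate point is justifying the exchange of $\partial_t$ and $\int$ in Steps 1--2. I would handle this by dominated convergence. The parameters $\mu(\boldsymbol{p},t)$ and $\boldsymbol{L}(\boldsymbol{p},t)$ are smooth in $t$, being neural networks with smooth activations, and $\Sigma \succeq \epsilon^2 I$ thanks to the softplus-plus-$\epsilon$ diagonal. Hence on a compact $t$-neighborhood, $|\partial_t p_t(\boldsymbol{d})|$ is bounded by a polynomial in $\|\boldsymbol{d}\|$ times a Gaussian envelope, which is integrable. The same bound multiplied by $|\tau(\boldsymbol{d})|$ is integrable provided $\tau$ has finite second moment; I would state this as an explicit assumption (it holds, e.g., if $g$ grows at most polynomially). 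A secondary subtlety is that unbiasedness must hold on an open interval in $t$, not at a single point, for the derivative in Step~2 to make sense. I would state this interpretation of the paper's assumption explicitly.
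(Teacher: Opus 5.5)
Your core argument (zero-mean score, differentiating the unbiasedness identity to get $\mathrm{Cov}(\tau,U)=1$, then Cauchy--Schwarz with $\mathrm{Var}(U)=\mathbb{E}[U^2]=I$) is exactly the paper's four-step proof, and it is correct. Your explicit treatment of the derivative--integral exchange and of unbiasedness on an open $t$-interval fills in what the paper leaves as ``standard regularity conditions.''

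One side remark, however, is backwards and should be removed. You claim the bound with $I_{\text{full}}$ ``implies the weaker bound with $I_\mu$ alone, since $I_\Sigma \ge 0$.'' But $I_\Sigma\ge 0$ gives $1/I_\mu \ge 1/I_{\text{full}}$. So $\mathrm{Var}(\tau)\ge 1/I_\mu$ is the \emph{stronger} statement; it does not follow, and it is false in general. For instance, take $\mu(\boldsymbol{p},t)\equiv 0$ and $\Sigma(\boldsymbol{p},t)=t\,\mathrm{Id}_3$. Then the statistic $\tau(\boldsymbol{d})=\|\boldsymbol{d}\|^2/3$ is unbiased with $\mathrm{Var}(\tau)=2t^2/3=1/I_{\text{full}}$, whereas $I_\mu=0$. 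The theorem therefore has to be read with $I=\mathbb{E}[U^2]=I_\mu+I_\Sigma$, which is what your Step 3 and the paper's Step 4 both use. It cannot be read with the main text's later redefinition $I:=I_\mu$.

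There is also a smaller slip in your regularity paragraph. A lower-triangular $\boldsymbol{L}$ with diagonal entries at least $\epsilon$ does not give $\Sigma\succeq\epsilon^2\,\mathrm{Id}$: a large off-diagonal entry pushes $\lambda_{\min}(\Sigma)$ below $\epsilon^2$, and only $\det\Sigma\ge\epsilon^{2n}$ follows. However, continuity of $\boldsymbol{L}$ in $t$ on a compact neighborhood still gives a uniform positive lower bound on $\lambda_{\min}(\Sigma)$, which is all your domination argument needs.
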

\begin{proof}
Let $U(\boldsymbol{d}; \boldsymbol{p}, t) := \partial_t \log p(\boldsymbol{d}|\boldsymbol{p},t)$ denote the \emph{score function}, i.e., the derivative of the log-likelihood with respect to the temporal parameter $t$. The proof proceeds in four steps.

\textbf{Step 1: Score has zero mean.} Under standard regularity conditions,
\begin{equation}
\mathbb{E}_{p(\boldsymbol{d} \mid \boldsymbol{p},t)}[U(\boldsymbol{d}; \boldsymbol{p}, t)] = \int \frac{\partial_t p(\boldsymbol{d} \mid \boldsymbol{p},t)}{p(\boldsymbol{d} \mid \boldsymbol{p},t)} p(\boldsymbol{d} \mid \boldsymbol{p},t) \, d\boldsymbol{d} = \partial_t \int p(\boldsymbol{d} \mid \boldsymbol{p},t) \, d\boldsymbol{d} = \partial_t 1 = 0.
\end{equation}

\textbf{Step 2: Unbiasedness condition.} We assume $\mathbb{E}_{p(\boldsymbol{d} \mid \boldsymbol{p},t)}[\tau] = t$, meaning that the population-average intrinsic time equals the chronological time. Differentiating both sides with respect to $t$:
\begin{equation}
1 = \partial_t \mathbb{E}_{p(\boldsymbol{d} \mid \boldsymbol{p},t)}[\tau] = \int \tau \cdot \partial_t p(\boldsymbol{d} \mid \boldsymbol{p},t) \, d\boldsymbol{d} = \int \tau \cdot p(\boldsymbol{d} \mid \boldsymbol{p},t) \cdot U(\boldsymbol{d}; \boldsymbol{p}, t) \, d\boldsymbol{d} = \mathbb{E}_{p(\boldsymbol{d} \mid \boldsymbol{p},t)}[\tau \cdot U].
\end{equation}

\textbf{Step 3: Covariance identity.} Since $\mathbb{E}_{p(\boldsymbol{d} \mid \boldsymbol{p},t)}[\tau] = t$ and $\mathbb{E}_{p(\boldsymbol{d} \mid \boldsymbol{p},t)}[U] = 0$, we have
\begin{equation}
\mathrm{Cov}_{p(\boldsymbol{d} \mid \boldsymbol{p},t)}(\tau, U) = \mathbb{E}_{p(\boldsymbol{d} \mid \boldsymbol{p},t)}[(\tau - t)(U - 0)] = \mathbb{E}_{p(\boldsymbol{d} \mid \boldsymbol{p},t)}[\tau \cdot U] - t \cdot \mathbb{E}_{p(\boldsymbol{d} \mid \boldsymbol{p},t)}[U] = 1.
\end{equation}

\textbf{Step 4: Cauchy--Schwarz inequality.} Applying the Cauchy--Schwarz inequality:
\begin{equation}
1 = |\mathrm{Cov}_{p(\boldsymbol{d} \mid \boldsymbol{p},t)}(\tau, U)|^2 \leq \mathrm{Var}_{p(\boldsymbol{d} \mid \boldsymbol{p},t)}(\tau) \cdot \mathrm{Var}_{p(\boldsymbol{d} \mid \boldsymbol{p},t)}(U) = \mathrm{Var}_{p(\boldsymbol{d} \mid \boldsymbol{p},t)}(\tau) \cdot I(\boldsymbol{p}, t).
\end{equation}
Rearranging yields $\mathrm{Var}_{p(\boldsymbol{d} \mid \boldsymbol{p},t)}(\tau) \geq 1 / I(\boldsymbol{p}, t)$.
\end{proof}

\subsubsection{Geometric Interpretation of the Fisher Information}
\label{app:geometry}
The additive structure of the Fisher Information (Eq.~\eqref{eq:fisher_final}) admits an elegant geometric interpretation. The total information $I(\boldsymbol{p},t)$ decomposes into contributions from two orthogonal sub-manifolds~\cite{skovgaard1984riemannian,amari2016information}:

\paragraph{Manifold of Means ($I_\mu = \boldsymbol{\mu}_t^\top \Sigma^{-1} \boldsymbol{\mu}_t$).} This term is the squared Mahalanobis norm of the mean velocity $\boldsymbol{\mu}_t$, measuring the squared rate of shape evolution on the statistical manifold of Gaussian means. The precision matrix $\Sigma^{-1}$ serves as the Riemannian metric, weighting structural changes by the local signal-to-noise ratio. A small displacement in a high-precision (low-variance) region contributes more information than a large displacement in a noisy region.

\paragraph{SPD Manifold ($I_\Sigma = \frac{1}{2}\mathrm{tr}((\Sigma^{-1}\Sigma_t)^2)$).} This term is the squared velocity on the manifold of symmetric positive definite (SPD) matrices, quantifying the squared rate of uncertainty evolution. It reveals that changes in the dispersion profile---independent of any mean shift---constitute an orthogonal source of temporal evidence.

This decomposition justifies our focus on $I_\mu$ for developmental time estimation: while $I_\Sigma$ is statistically informative, it captures how population diversity evolves rather than how individual subjects progress along the mean trajectory.

\section{Datasets and Experiments}
\label{appa:datasets}

\subsection{Starman Dataset}
\label{subsec:starman_dataset}

\begin{figure*}
    \centering
    \includegraphics[width=0.99\linewidth]{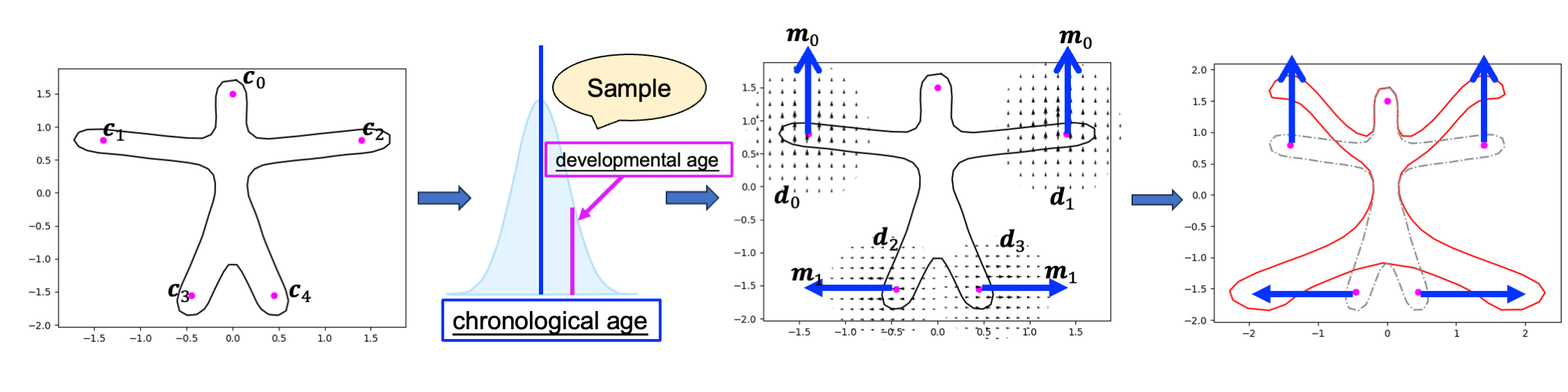}
    \caption{Starman dataset generation. \textbf{Left:} Template shape with four control points governing arm and leg deformations. \textbf{Middle:} Shape evolution across physical time $t \in [0, 1]$. \textbf{Right:} Temporal uncertainty functions $\sigma_\tau(t)$ for Starman(L), where arms (red) exhibit early growth and legs (blue) exhibit late growth.}
    \label{fig:starman_dataset}
\end{figure*}

We generate two synthetic 2D datasets with known ground-truth intrinsic time to validate \texttt{PRISM} under controlled conditions: \textbf{Starman(G)} with global (shape-level) temporal uncertainty, and \textbf{Starman(L)} with local (spatially-varying) temporal uncertainty. As illustrated in Fig.~\ref{fig:starman_dataset}, each starman shape is synthesized by applying a covariate-controlled deformation to a template shape, representing different poses.

\textbf{Shape Parameterization.}
The template is a closed polygon with 100 uniformly sampled vertices. Four control points $\{\boldsymbol{c}_i\}_{i=0}^{3}$ govern limb deformations via Gaussian radial basis functions:
\begin{equation}
    \boldsymbol{d}_i(\boldsymbol{p}, \tau_i) = \tau_i \cdot \exp\left(-\frac{\|\boldsymbol{p} - \boldsymbol{c}_i\|^2}{2\sigma^2}\right) \cdot \boldsymbol{v}_i, \quad \sigma = 0.5
\end{equation}
where $\tau_i$ is the intrinsic time for control point $i$, and $\boldsymbol{v}_i$ is the deformation direction (vertical for arms $i \in \{0,1\}$; horizontal for legs $i \in \{2,3\}$). The total deformation is $\boldsymbol{d}(\boldsymbol{p}) = \sum_{i=0}^{3} \boldsymbol{d}_i(\boldsymbol{p}, \tau_i)$.

\textbf{Temporal Uncertainty Model.}
To simulate inter-subject developmental variability, we model intrinsic time $\tau$ as a function of physical time $t$ and a subject-specific latent variable $z \sim \mathcal{N}(0, 1)$:
\begin{equation}
    \tau = t + z \cdot \sigma_\tau(t)
\end{equation}
The temporal uncertainty $\sigma_\tau(t)$ follows a logistic function capturing age-dependent heteroscedasticity:
\begin{equation}
\sigma_\tau(t) = \sigma_{\min} + \frac{\sigma_{\max} - \sigma_{\min}}{1 + \exp(-(t - t_{50})/k)}
\label{eq:sigma_logistic}
\end{equation}

\textbf{Starman(G): Global Temporal Uncertainty.}
All control points share a single intrinsic time $\tau_0 = \tau_1 = \tau_2 = \tau_3 = \tau$, with parameters $(\sigma_{\min}, \sigma_{\max}, t_{50}, k) = (0.01, 0.20, 0.88, 0.12)$. This models uniform developmental timing across the entire body.

\textbf{Starman(L): Local Temporal Uncertainty.}
Arms and legs follow distinct developmental trajectories with separate intrinsic times:
\begin{small}
\begin{equation}
\begin{aligned}
& \tau_{\text{arm}} &= t + z \cdot \sigma_{\tau,\text{arm}}(t), \quad (\sigma_{\max}, t_{50}, k) = (0.15, 0.30, 0.10) \\
& \tau_{\text{leg}} &= t + z \cdot \sigma_{\tau,\text{leg}}(t), \quad (\sigma_{\max}, t_{50}, k) = (0.20, 0.88, 0.12)
\end{aligned}
\end{equation}
\end{small}

\textbf{Dataset Statistics.}
Each dataset contains 1,000 training and 1,000 testing subjects, with 1--9 longitudinal observations per subject sampled uniformly across $t \in [0, 1]$.

\subsection{ANNY Dataset}
\label{subsec:anny_dataset}

We use ANNY~\cite{2025humanmeshmodelinganny}, a parametric 3D human body model spanning the full lifespan from infancy to old age. ANNY captures the morphological changes during childhood development, including evolving body proportions and limb-to-torso ratios.

\textbf{Shape Generation.}
Each 3D mesh is generated by the ANNY model with age as the sole factor influencing the shapes. The output is a triangulated mesh.

\textbf{Temporal Uncertainty Model.}
We simulate inter-subject temporal variability following a heteroscedastic model for bone age uncertainty~\cite{cole2010sitar,thodberg2008bonexpert}, where temporal uncertainty increases during puberty, as
\begin{equation}
    \tau = t + z \cdot \sigma_\tau(t), \quad z \sim \mathcal{N}(0, 1) \,,
\end{equation}
whose temporal uncertainty function follows a sigmoid:
\begin{equation}
    \sigma_\tau(t) = \sigma_{\min} + \frac{\sigma_{\max} - \sigma_{\min}}{1 + \exp(-k(t - t_{\text{pub}}))}
\end{equation}
where $t \in [0, 20]$ years is physical age, $\sigma_{\min} = 0.4$ years, $\sigma_{\max} = 1.3$ years, $t_{\text{pub}} = 8$ years marks puberty onset, and $k = 0.5$ controls transition sharpness. We choose these parameters to reflect general trends in pediatric growth literature~\cite{cole2010sitar, thodberg2008bonexpert}.

\textbf{Dataset Statistics.}
We generate 1,000 training subjects and 100 testing subjects. The training set contains 90\% cross-sectional subjects and 10\% longitudinal subjects. The test set includes 50 cross-sectional and 50 longitudinal subjects.

\subsection{Pediatric Airway Dataset}
\label{supp:airway_dataset}

\begin{figure*}
    \centering
    \includegraphics[width=0.9\linewidth]{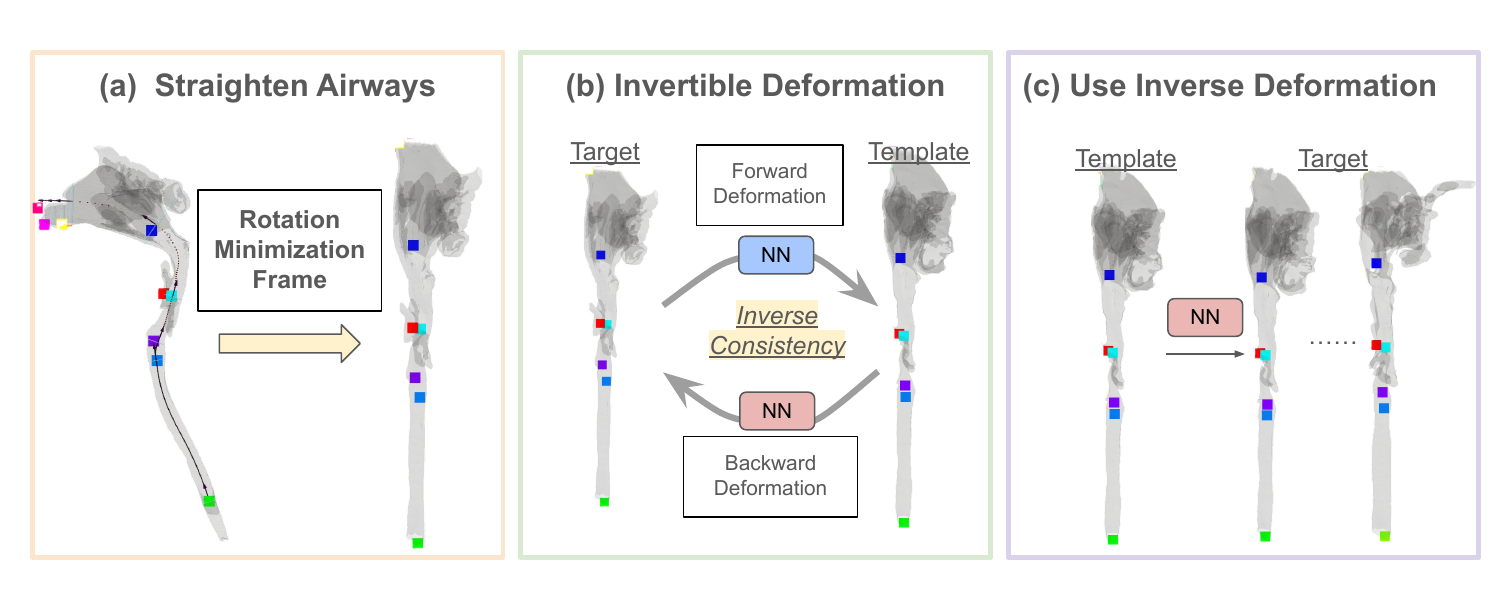}
    \caption{Preprocessing pipeline for pediatric airway shape analysis. (a) Raw airway geometries are straightened using a rotation-minimizing frame to remove extrinsic pose variations while preserving intrinsic shape characteristics. (b) An invertible neural network learns bidirectional deformations between each target shape and a common template, with inverse consistency regularization ensuring geometric coherence. (c) The learned inverse deformation is applied to map the template onto each target, establishing dense point correspondences across the population for downstream developmental modeling.}
    \label{fig.proprocess_airway_in_3steps}
\end{figure*}

Our pediatric airway dataset comprises 358 CT scans from 264 subjects, with ages ranging from 0 to 19.4 years. Some subjects have multiple longitudinal scans (up to 11 scans per subject), providing valuable temporal information for modeling airway development. We perform an 80/20 train-test split at the subject level to prevent data leakage.

\subsubsection{Data Acquisition}

The pediatric airway dataset consists of CT scans from children aged 1 month to 19 years. The dataset includes 230 single-visit subjects and 34 subjects with repeated imaging (2--11 scans per subject). 

\subsubsection{Preprocessing Pipeline}

\paragraph{Airway Segmentation.}
We employ a two-stage deep learning approach for automatic airway segmentation. The first stage uses a coarse-resolution UNet~\cite{unetcciccek20163d} to generate an initial prediction, which then guides a second full-resolution UNet for refined segmentation. The segmentation model was trained on 68 manually annotated CT-segmentation pairs.

\paragraph{Centerline Extraction.}
Following~\cite{atlashong2013pediatric}, we extract the airway centerline by solving Laplace's equation within the segmented volume. The centerline is defined as the locus of centers of heat-value isosurfaces.

\paragraph{Airway Straightening.}
The pediatric airway exhibits significant anatomical curvature from the nasal cavity to the carina. To disentangle shape variation from pose variation, we apply a centerline-based straightening transformation using a rotation minimizing frame (RMF)~\cite{bishop1975rotationminimization} constructed along the extracted airway centerline. As shown in Fig.~\ref{fig.proprocess_airway_in_3steps}(a), this RMF maps the curved airway into a cylinder-like representation where the $z$-axis corresponds to airway depth and the $xy$-plane captures cross-sectional variations.

\paragraph{SDF Sampling.}
Following~\cite{park2019deepsdf,sitzmann2020siren}, we compute signed distance function (SDF) samples for each straightened airway mesh. On-surface points are sampled with corresponding vertex normals; off-surface points are sampled within a padded bounding box with SDF values computed via nearest-surface distance.

\paragraph{Out-of-Distribution Cases}
We exclude 31 scans with subglottic stenosis from training. Subglottic stenosis is a pathological narrowing of the airway below the vocal cords, resulting in abnormal cross-sectional geometry. These cases serve as out-of-distribution (OOD) samples for evaluating anomaly detection capabilities.

\subsection{Baseline Adaption for OOD}
\label{app.adap}
We adapt A-SDF and NAISR for OOD detection via test-time optimization. Given a test shape, we freeze the network and optimize the latent time $\hat{\tau}$ to minimize reconstruction loss. The anomaly score is defined as $\hat{\tau} - t$, measuring the deviation between optimized intrinsic time and chronological age. This formulation is clinically motivated: airway obstruction (e.g., subglottic stenosis) manifests as abnormal narrowing, causing the affected region to resemble a developmentally younger airway. A negative score thus indicates pathological underdevelopment. For \texttt{PRISM} (Global), we apply the same strategy using the mean predicted time as Eq.~\eqref{eq.time_estimation_mean}. For \texttt{PRISM} (Local), we use the spatially-varying OOD score using Eq.~\eqref{eq.ood}.

\subsection{Additional Experiment Results}
\begin{sidewaysfigure*}
    \centering
    \includegraphics[width=0.9\linewidth]{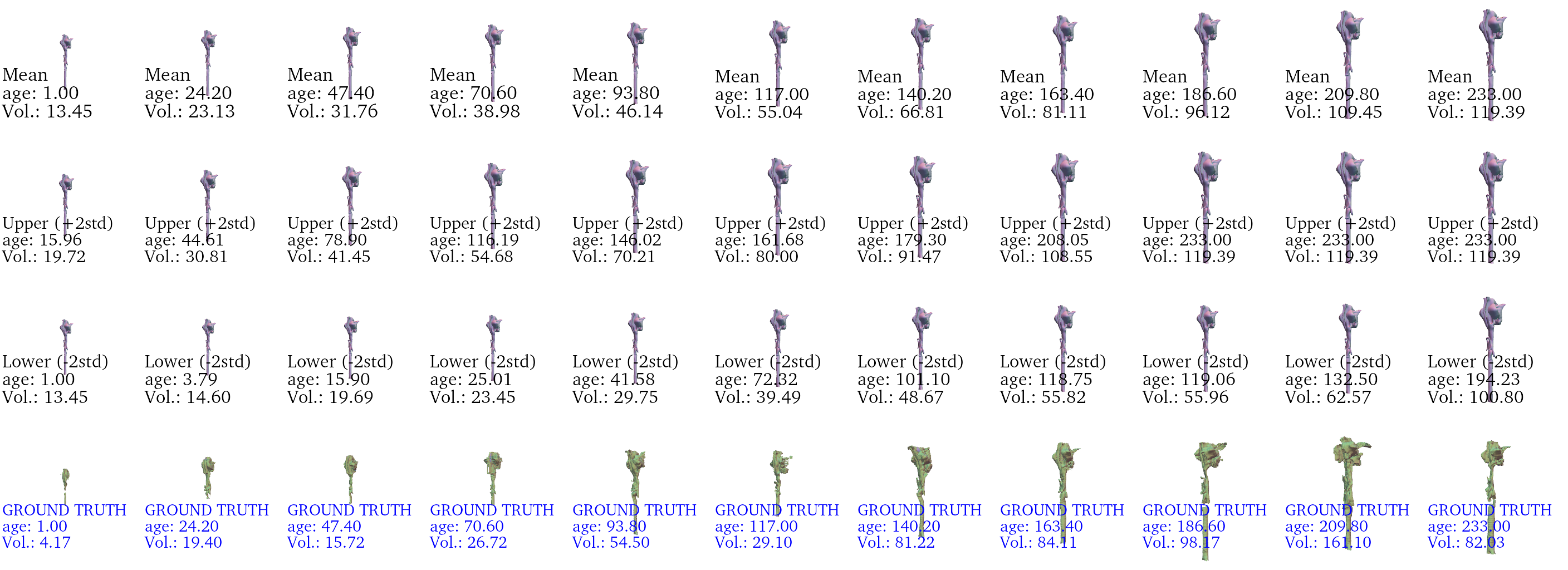}
    \caption{Probabilistic reconstruction of longitudinal pediatric airway development. This visualization demonstrates \texttt{PRISM}'s capacity to represent real-world clinical data. The columns correspond to increasing ages from left to right.
(Row 1) Mean Trajectory: The predicted mean shape evolution ($\mu(\boldsymbol{p},t)$), capturing the central trend of anatomical growth.
(Rows 2 \& 3) Uncertainty Bounds: The shapes generated at the upper ($t + 2\sigma_{\tau}$) and lower ($t - 2\sigma_{\tau}$) bounds of the predicted intrinsic time distribution. These bounds visualize the learned population bands.
(Bottom Row) Nearest Ground Truth: The observed anatomy from the subject with the age closest to the queried time $t$. Note that due to the sparsity of clinical acquisition, exact temporal matches are unavailable; the close morphological alignment despite this temporal gap highlights the model's robustness in capturing developmental trends.
The correspondence between the probabilistic reconstruction and the ground truth validates the model's ability to capture complex, non-linear developmental patterns and their intrinsic biological variability.}
    \label{fig.generated_airway_shape}
\end{sidewaysfigure*}

As another qualitative evaluation, Fig.~\ref{fig.generated_airway_shape} shows \texttt{PRISM} applied to the pediatric airway dataset. For a queried chronological age $t$, \texttt{PRISM} infers a distribution over the intrinsic (biological) time $\tau$, summarized by its mean $t$ and standard deviation $\sigma_\tau$. Propagating this temporal uncertainty through the shape evolution $\mu(\cdot)$, we render the central anatomy $\mu(\boldsymbol{p}, t)$ together with the shapes obtained at the $t\pm 2\sigma_\tau$ endpoints of the intrinsic-time distribution, which delineate the corresponding geometric envelope.

Three aspects are worth noting. First, the mean trajectory recovers a smooth, non-linear progression of anatomical growth consistent with expected airway development. Second, most observed airway shapes fall within the geometric envelope spanned by the $t\pm 2\sigma_\tau$ reconstructions, indicating that the propagated temporal uncertainty plausibly covers the range of real anatomical variation. Third, although exact temporal matches to a queried age $t$ are rarely available under sparse clinical acquisition, the reconstruction remains in close morphological agreement with the nearest observed anatomy, suggesting that \texttt{PRISM} interpolates population-level developmental trends rather than memorizing individual training subjects. Taken together, these qualitative results support our claim that \texttt{PRISM} successfully captures the biological variability of the airway population.

\subsection{Architectural Comparison with NAISR and A-SDF}
\label{supp:arch_comparison}

\begin{table}[h]
\centering
\caption{Architectural and modeling comparison of PRISM with its closest counterparts. Parameter counts are for the \emph{trainable} core model $f(\cdot)$ under comparison. Per-shape latent deformation codes are excluded for all methods, as their count scales with dataset size rather than model capacity. PRISM's amortized inverse encoder is likewise excluded.}
\label{tab:comparison}
\small
\setlength{\tabcolsep}{5pt}
\begin{tabular}{lcccc}
\toprule
Method & Deformable & Intrinsic time & Supervision & \#Params \\
\midrule
A-SDF        & \ding{55} & global          & SDF field & 1.97M \\
NAISR        & \ding{51} & global          & SDF field & 3.68M \\
\rowcolor{gray!12}
PRISM (ours) & \ding{51} & \textbf{local}  & \textbf{deformations} & \textbf{3.57M} \\
\bottomrule
\end{tabular}
\end{table}

Table~\ref{tab:comparison} summarizes the key differences between \texttt{PRISM} and its two closest counterparts. \textbf{(i) Representation.} Both \texttt{PRISM} and NAISR are deformation-based, representing each shape as a deformation of a shared template, whereas A-SDF conditions an implicit decoder on a latent code without an explicit deformation field. This deformation-based formulation enables principled extrapolation along the developmental axis and underlies \texttt{PRISM}'s advantage in longitudinal prediction, as shown in Tab.~\ref{tab:longitudinal}. \texttt{PRISM} and NAISR also differ in how deformation is parameterized. NAISR represents each shape by warping the query point into a shared template, $S_i(\boldsymbol{p}) = T\!\left(\boldsymbol{p} + d_i(\boldsymbol{p})\right)$, i.e.\ the deformation $d_i$ maps points \emph{into} the template frame. Inverting this map to locate a \emph{fixed} template point on each subject is not modeled, so a common template point cannot be tracked across subjects to define per-point quantities. \texttt{PRISM} instead learns the forward deformation $\boldsymbol{q} \mapsto \boldsymbol{q} + D_i(\boldsymbol{q})$ that places each fixed template point $\boldsymbol{q}$ on subject $i$, providing the cross-subject correspondence required to define spatially varying $\hat{\tau}$ and $\Sigma$. \textbf{(ii) Intrinsic time.} \texttt{PRISM} estimates developmental time \emph{locally}, at each spatial location, rather than as a single global value per shape as in NAISR and A-SDF. This locality is precisely what makes spatially-resolved out-of-distribution (OOD) detection meaningful, since a localized anomaly cannot be detected from a single global time estimate. \textbf{(iii) Supervision signal.} NAISR and A-SDF are supervised on the reconstructed SDF field, whereas \texttt{PRISM} is supervised directly on the (probabilistic) deformation field, predicting $(\mu, \Sigma)$ rather than a point SDF value. \texttt{PRISM} therefore receives its learning signal directly, without routing it through SDF reconstruction, and yields calibrated uncertainty as a native output.

\end{document}